\newcommand{\data}{\mathcal{D}}
\newcommand{\myparagraph}[1]{ \noindent \textbf{#1.}}
\DeclareMathOperator*{\argmax}{arg\,max}
\DeclareMathOperator*{\argmin}{arg\,min}
\newcommand{\filtfunc}{\mathcal{F}}
\def\comments{0}
    \newcommand{\matthew}[1]{\marginpar{\tiny\color{orange}{MJ: #1}}}
    \newcommand{\giorgio}[1]{\marginpar{\tiny\color{blue}{GS: #1}}}
    \newcommand{\alina}[1]{\marginpar{\tiny\color{purple}{AO: #1}}}
    \newcommand{\matthew}[1]{}
    \newcommand{\giorgio}[1]{}
    \newcommand{\alina}[1]{}
\newcommand{\ignore}[1]{}
\theoremstyle{definition}
\newtheorem{definition}{Definition}[section]
\newtheorem{theorem}{Theorem}[section]
\newcommand{\fmatch}{\ensuremath{\textsc{FeatureMatch}}}
\newcommand{\cmatch}{\ensuremath{\textsc{ClusterMatch}}}
\newcommand{\sbert}{\ensuremath{\textsc{BERT-LL}}}
\newcommand{\lbert}{\ensuremath{\textsc{BERT-FT}}}
\newcommand{\scifar}{\ensuremath{\textsc{Conv}}}
\newcommand{\lcifar}{\ensuremath{\textsc{VGG-FT}}}
\newcommand{\sutk}{\ensuremath{\textsc{VGG-LL}}}
\newcommand{\lutk}{\ensuremath{\textsc{VGG-FT}}}
\title{Subpopulation Data Poisoning Attacks}
\author{Matthew Jagielski}
\author{Giorgio Severi}
\author{Niklas Pousette Harger}
\author{Alina Oprea}
\affil{Khoury College of Computer Sciences, Northeastern University}
\date{}
\begin{document}
\maketitle

%\vspace{-0.6cm}
\begin{abstract}

Machine learning  systems are deployed in critical settings,  but they might fail in unexpected ways, impacting the accuracy of their predictions. Poisoning attacks against machine learning induce adversarial modification of data used by a machine learning algorithm  to  selectively  change  its output   when  it is deployed. In this work, we introduce a novel data poisoning attack called a \emph{subpopulation attack}, which is particularly relevant when datasets are large and diverse. We design a modular framework for subpopulation attacks, instantiate it with different building blocks, and show that the attacks are effective  for a variety of datasets and machine learning models. We further optimize the attacks in continuous domains using influence functions and gradient optimization methods. Compared to existing backdoor poisoning attacks, subpopulation attacks have the advantage of inducing misclassification in naturally distributed data points at inference time, making the attacks extremely stealthy. We also show that our attack strategy can be used to improve upon existing targeted attacks. We prove that, under some assumptions, subpopulation attacks are impossible to defend against, and empirically demonstrate the limitations of existing defenses against our attacks, highlighting the difficulty of protecting machine learning against this threat.
\end{abstract}

\section{Introduction}

Machine learning (ML) and deep learning systems are being deployed in sensitive applications, but they can fail in multiple ways, impacting the confidentiality, integrity and availability of user data~\cite{kumar2019failure}. To date, evasion attacks or inference-time attacks have been studied extensively in image classification~\cite{Szegedy14,Goodfellow14,Carlini17}, speech recognition~\cite{carlini2018audio,schonherr2018adversarial}, and cyber security~\cite{Srndic14,xu2016automatically,kreuk2018deceiving}. Still, among the threats machine learning and deep learning systems are vulnerable to, poisoning attacks at training time has recently surfaced as the threat perceived as most potentially dangerous to  companies' ML infrastructures~\cite{kumar2020adversarial}. The threat of poisoning attacks becomes even more severe as modern deep learning systems rely on large, diverse datasets, and their size makes it difficult to guarantee the trustworthiness of the training data.

%according to a survey of industry practitioners

In existing poisoning attacks, adversaries can insert a set of corrupted, poisoned data at training time to induce a specific outcome in classification at inference time.
Existing poisoning attacks can be classified into: \emph{availability attacks}~\cite{Biggio2012PoisoningAA,xiao2015feature,jagielski2018manipulating} in which  the overall accuracy of the model is degraded; \emph{targeted attacks}~\cite{koh2017understanding, shafahi2018poison, suciu2018does} in which specific test instances are targeted for misclassification; and
\emph{backdoor attacks}~\cite{gu2017badnets} in which a backdoor pattern added to testing points induces misclassification. Poisoning attacks range in the amount of knowledge the attacker has about the ML system, varying for example in knowledge about feature representation, model architecture, and training data~\cite{suciu2018does}.
%with \emph{white-box} attacks assuming full knowledge, \emph{black-box} attacks assuming minimal knowledge about the ML system, and \emph{gray-box} attacks assuming partial knowledge, such as the feature representation or model architecture~\cite{suciu2018does}.

The threat models for poisoning attacks defined in the literature rely on  strong assumptions on the adversarial capabilities. In both poisoning availability attacks~\cite{Biggio2012PoisoningAA, xiao2015feature,jagielski2018manipulating} and backdoor attacks~\cite{gu2017badnets} the adversary needs to control a relatively large fraction of the training data (e.g., 10\% or 20\%) to influence the model at inference time.  Moreover, in backdoor attacks an adversary is assumed to have the capability of modifying both the training and the testing data to include the backdoor pattern. In targeted attacks, there is an assumption that the adversary has knowledge on the exact target points during training, which is not always realistic. Additionally, the impact of a targeted attack is localized to a single point or a small set of points~\cite{shafahi2018poison,geiping2020witches}. In our work, we attempt to bridge gaps in the literature on poisoning attacks.

%believe that the literature on poisoning attacks is not yet comprehensive in its coverage of practical and realistic adversarial attacks.

%while targeted attacks assume that the adversary has knowledge on the exact targeted testing point during training. Moreover, in a backdoor attack, an adversary is assumed to modify both the training and the testing dataset, by inserting a specific backdoor pattern in both. 

%For these reasons, existing poisoning attacks can be difficult to mount in practice, and even if successful, there are many proposals for defending against them~\cite{jagielski2018manipulating,diakonikolas2018sever,tran_spectral_2018, wang_neural_2019}.

\subsection{Our Contributions}

%After identifying a gap in the poisoning attack literature, 
\myparagraph{Subpopulation poisoning attacks}
We introduce a novel, realistic, form of data poisoning attack we call the \emph{subpopulation attack}, which is particularly relevant for large, diverse datasets. In this attack, an adversary's goal is to compromise the performance of a classifier on a particular subpopulation of interest, while maintaining unaltered its performance for the rest of the data. The advantages of our novel subpopulation attack are that it requires no adversarial knowledge of the exact model architecture and parameters,
%(i.e., operates in a black-box fashion)
and most importantly, the attack does not need to modify points at inference time, which has been a common thread in prior backdoor poisoning attacks~\cite{gu2017badnets,Turner2018CleanLabelBA}. We also uncover an interesting connection with research in algorithmic fairness, which shows that ML classifiers might act differently for different minority groups~\cite{hardt2016equality,buolamwini2018gender}.
We believe this fairness disparity contributes to ML classifiers' vulnerability to our stealthy subpopulation attacks. 

\myparagraph{Subpopulation attack framework}
We propose a  modular framework for conducting subpopulation attacks that includes a subpopulation identification component and an attack generation procedure. We instantiate these components with different building blocks, demonstrating the modularity of the design. Subpopulations can be identified either by exact feature matching when annotations are available or clustering in representation layers of the neural network models. For poisoning attack generation we evaluate several methods: a generic label flipping attack applicable to data from multiple modalities and an attack based on influence functions~\cite{koh2017understanding}. We also propose a gradient optimization attack that improves upon label flipping on continuous data, and has better performance than the influence function attack.

\myparagraph{Evaluation on end-to-end and transfer learning} We demonstrate the effectiveness of our framework on datasets from multiple modalities, including tabular data (UCI Adult), image data (CIFAR-10 for image classification, UTKFace for face recognition), and text data (IMDB for sentiment analysis). We use a range of neural network models, including feed-forward neural networks, convolutional networks, and transformers. We show that both end-to-end trained models, as well as transfer learning models are vulnerable to this new attack vector. Additionally, the size of the attack is small relative to the overall dataset and the poisoned points follow the training data distribution, making the attacks extremely stealthy. For instance, with only 126 points, we induce a classification error of 74\% on a subpopulation in  CIFAR-10, maintaining similar accuracy to the clean model on points outside the subpopulation. In the UTKFace face recognition dataset, an attack of 29 points increases classification error by 50\% in one subpopulation. We also show that generating subpopulations can improve targeted attacks~\cite{koh2017understanding, geiping2020witches}; for example, we show that the Witches' Brew attack~\cite{geiping2020witches} is 86\% more effective when targeting points from a single subpopulation.

\myparagraph{Challenges for defenses}
Several defenses against availability attacks~\cite{jagielski2018manipulating,diakonikolas2018sever}, targeted attacks~\cite{Peri2019DeepKD}, and backdoor attacks~\cite{chen_detecting_2018,liu_fine-pruning_2018, tran_spectral_2018, wang_neural_2019,veldanda_nnoculation_2020} have been proposed. We believe their stealthiness makes subpopulation attacks difficult to defend against. To support this claim, we provide an impossibility result, showing that models based on local decisions, such as mixture models and $k$-nearest neighbors, are inherently vulnerable to subpopulation attacks. We also evaluate the performance of existing defenses such as TRIM/ILTM~\cite{jagielski2018manipulating, shen2019learning}, SEVER~\cite{diakonikolas2018sever}, activation clustering~\cite{chen_detecting_2018}, and spectral signatures~\cite{tran_spectral_2018} on our attacks and show their limitations.

\section{Background}

\subsection{Neural Networks Background}
Consider a training set of $n$ examples $D=\lbrace x_i, y_i\rbrace_{i=1}^n$, with each feature vector $x_i\in\mathcal{X}$ and label $y_i\in \mathcal{Y}$ drawn from some data distribution $\mathcal{D}$. In this paper, we consider multiclass classification tasks, where a $K$-class problem has $\mathcal{Y}=[K]$. The goal of a learning algorithm $A$, when given a dataset $D$ is to return a model $f$ with parameters $\theta$ which correctly classifies as many data points as possible, maximizing $\mathbb{E}_{x,y\sim\mathcal{D}}\mathbbm{1}\left(f(x)=y\right)$. 
The way in which this is typically done is by stochastic gradient descent on a differentiable loss function. To approximate minimizing error, the model is designed to output probabilities of each of the $K$ classes (we will write the $i$th class probability as $f(x)_i$). In multiclass classification, it is typical to minimize the categorical cross-entropy loss $\ell$:
$$
\ell(X, Y, f) = \frac{1}{|X|}\sum_{i=1}^{|X|}y_i (1 - f(x_i)_{y_i}).
$$

In stochastic gradient descent the loss function is decreased by updating $\theta$ in the steepest descent direction, the gradient
%which will most decrease the loss
$\nabla_{\theta}\ell(X, Y, f)$. Because computing this entire gradient is a slow and memory-intensive operation, stochastic gradient descent uses a random batch of data $X_B, Y_B$ to compute a gradient in an iteration, using the following update strategy:
$$
\theta_{new} = \theta_{old} - \eta \nabla_\theta \ell(X_B, Y_B, \theta),
$$
where $\eta$ is called the learning rate and $\theta_{new}$ replaces $\theta_{old}$ in the next iteration. This process is repeated for a specific number of iterations (a full pass over the training set is called an \emph{epoch}), or until a convergence criterion is met. There exist a large number of alternative optimization algorithms, such as momentum and Adam~\cite{kingma2014adam}, which leverage the history of updates to more efficiently minimize the loss function.

In a neural network, the model $f$ is a chain of linear and nonlinear transformations. The linear transformations are called layers, and the nonlinear transformations are activation functions. Examples activation functions are   ReLU, sigmoid, and softmax. 
%$\text{ReLU}(x) = \max(0, x)$
%The architecture of a neural network represents how the linear transformations are combined. 
Domain-specific neural network architectures have been successful - convolutional neural networks~\cite{lecun2015lenet} are widely used for images, and recurrent neural networks~\cite{hochreiter1997long} and transformers~\cite{vaswani2017attention} are popular for text.

In standard training, the model parameters $\theta$ are randomly initialized. A widespread alternative approach is called transfer learning~\cite{pan2009survey}, in which knowledge is transferred from a large dataset. In this approach, a model trained on a large dataset is used as an initialization for the smaller dataset's model. In this case, a common approach is to simply use the pretrained model as a feature extractor, keeping all but the last layer fixed, and only training the last layer. Another approach is to use it as an initialization for standard training, and allowing the whole network to be updated. These are both common approaches~\cite{kornblith2019better}, which we refer to as "last-layer" transfer learning and "fine-tuning" transfer learning, respectively. 

\subsection{Poisoning Attacks}
\begin{table*}[]
    \renewcommand\arraystretch{1.15}
    \tabcolsep=.15cm
    \centering
    \small
    \begin{tabular}{c | c c c c}
        \hline
        Attack & Data modality & No training data & Natural Target & Generalizes \\
        \hline
        Targeted Poisoning & Image & \checkmark & \checkmark & $\times$ \\
        Federated Targeted & Image/Text & \checkmark & \checkmark & $\times$ \\
        Reflection Backdoor & Image & \checkmark & $\times$ & \checkmark \\
        Composite Backdoor & Image/Text & $\times$ & $\times$ & \checkmark \\
        \textbf{Subpopulation (Ours)} & \textbf{Image/Text/Tabular} & \textbf{\checkmark} & \textbf{\checkmark} & \textbf{\checkmark} \\
    \end{tabular}
    \caption{Comparison to related work. The table shows the data modality the attack applies to, whether the attack requires the knowledge of the exact training points used by the victim model, whether the attacker needs to be able to modify points during inference (natural target), and if the attack generalizes to points which are not known at the time of attack. Targeted poisoning attacks include \cite{geiping2020witches, koh2017understanding}. Federated backdoor is from \cite{bagdasaryan2018backdoor}. Reflection backdoor is from ~\cite{liu2020reflection}. Composite backdoor is from ~\cite{lin2020composite}.}
    \label{tab:assumptions}
    \vspace{-10pt}
\end{table*}

In settings with large training sets, machine learning is vulnerable to \emph{poisoning attacks}, where an adversary is capable of adding corrupted data into the training set. This is typically because data is collected from a large number of sources which cannot all be trusted. For example, OpenAI trained their GPT-2 model on all webpages where at least three users of the social media site Reddit interacted with the link \cite{radford2019language}.
Google also trains word prediction models from data collected from Android phones~\cite{hard2018federated}. An adversary with a small number of Reddit accounts or Android devices can easily inject data into these models' training sets. Furthermore, a recent survey on companies' perception of machine learning threats \cite{kumar2020adversarial} also highlighted poisoning attacks as one of the most concerning attacks.

More formally, in a poisoning attack, the adversary adds $m$ \emph{contaminants} or poisoning points $D_p=\lbrace x_i^p, y_i^p \rbrace_{i=1}^m$ to the training set, so that the learner minimizes the poisoned objective $\ell(D \cup D_p, f)$ rather than $\ell(D, f)$. The poisoned set $D_p$ is constructed to achieve some adversarial objective $\mathcal{L}(A(D\cup D_p))$.

Prior objectives for poisoning attacks distinguish between a targeted distribution to compromise performance on, $\data_{targ}$, and a distribution to maintain the original classifier's performance on, $\data_{clean}$. Then attacks can be measured in terms of two metrics, the \emph{collateral} damage and the \emph{target} damage. A collateral damage constraint requires the accuracy of the classifier on $\data_{clean}$ to be unaffected, while the target damage requires the performance on $\data_{targ}$ to be compromised.
%The collateral damage seeks to maximize the accuracy of the model on a clean distributionperformance on:
%\begin{equation}
%\textsc{Collat}(\data_{clean}, D_p) = \mathbb{E}_{(x,y)\sim \data_{clean}}\left[\mathbbm{1}\left(A(D\cup D_p)(x) \neq y\right) - \mathbbm{1}\left(A(D)(x) \neq y\right)\right],
%\label{eq:maxclean}
%\end{equation}
%\begin{equation}
%\textsc{Target}(\data_{targ}, D_p) = \mathbb{E}_{(x,y)\sim \data_{targ}}\left[\mathbbm{1}\left(A(D\cup D_p)(x) \neq y\right) - \mathbbm{1}\left(A(D)(x) \neq y\right)\right].
%\label{eq:mintarget}
%\end{equation}
Indeed, existing poisoning attacks can be  grouped into the following taxonomy:

%\begin{itemize}
%\vspace{0.1cm}
%\begin{description}
\myparagraph{Availability Attacks \cite{Biggio2012PoisoningAA, mei2015using, xiao2015feature, jagielski2018manipulating}} In an availability attack, the adversary wishes to arbitrarily reduce the model's classification performance. The target distribution is the original data distribution, with no collateral distribution.
%\item {\bf Availability Attacks}~\cite{Biggio2012PoisoningAA}, \cite{mei2015using}, \cite{xiao2015feature}, \cite{jagielski2018manipulating}: In an availability attack, the adversary wishes to arbitrarily reduce the model's classification performance. The target distribution is the original data distribution, with no collateral distribution.

\myparagraph{Targeted Attacks \cite{koh2017understanding, suciu2018does, shafahi2018poison}} In a targeted attack, the adversary has a small set of targets $\lbrace x_i^t,y_i^t\rbrace$ they seek to misclassify. The target distribution $\data_{targ}$ is $\lbrace x_i^t,y_i^t\rbrace$, while the collateral distribution removes $\lbrace x_i^t,y_i^t\rbrace$ from the support: $\data_{clean} = \data \setminus \lbrace x_i^t,y_i^t\rbrace$.
%\vspace{-0.1cm}
%\item {\bf Targeted Attacks}~\cite{koh2017understanding}, \cite{suciu2018does}, \cite{shafahi2018poison}: In a targeted attack, the adversary has a single target point $(x,y)$ they seek to misclassify. Then the target distribution $\data_{targ}$ consists only of $(x,y)$, while the collateral distribution removes $(x,y)$ from the support: $\data_{clean} = \data \setminus (x,y)$.

\myparagraph{Backdoor Attacks \cite{chen2017targeted, gu2017badnets}}: A backdoor attack is one in which the adversary is able to modify one or a few features of their input, seeking to cause predictable misclassification given the control of these few features. Then the collateral distribution is the natural data distribution $\data_{clean}=\data$, while the target distribution shifts the original data distribution by adding the backdoor perturbation, making $\data_{targ}=\textsc{Pert}(\data)$. Here $\textsc{Pert}$ is the function that adds a backdoor pattern to a data point.
%\vspace{-0.1cm}
%\item {\bf Backdoor Attacks}~\cite{chen2017targeted}, \cite{gu2017badnets}: A backdoor attack is one in which the adversary is able to modify one or a few features of their input, seeking to cause predictable misclassification given the control of these few features. Then the collateral distribution is the natural data distribution $\data_{clean}=\data$, while the target distribution shifts the original data distribution by adding the backdoor perturbation, making $\data_{targ}=\textsc{Pert}(\data)$. Here $\textsc{Pert}$ is the function that adds a backdoor pattern to a data point.

%\end{description}
%\vspace{-0.2cm}

\subsection{Related Work}
%\paragraph{Poisoning Attacks.} Availability attacks based on gradient descent have been proposed for linear regression~\cite{xiao2015feature,jagielski2018manipulating}, logistic regression~\cite{mei15teaching}, and SVM~\cite{biggio2012poisoning}.  SEVER~\cite{diakonikolas2018sever} uses SVD to remove points which bias gradients, while TRIM~\cite{jagielski2018manipulating} removes points with high loss. ~\cite{demontis19transfer} study the transferability of availability attacks. Clean-label backdoor attacks assume that the adversary does not control the labeling function~\cite{Turner2018CleanLabelBA}. Federated learning models have also been shown to be vulnerable to backdoor attacks~\cite{bagdasaryan2018backdoor}. Shafahi et al.~\cite{shafahi2018poison} introduce a clean-label, optimization-based targeted poisoning attack. Suciu et al.~\cite{suciu2018does} study the transferability of targeted attacks. Schuster et al.~\cite{schuster2020humpty} show targeted poisoning attacks on  word embedding models used for NLP tasks.

Adversarial attacks against machine learning  systems can be classified into \emph{poisoning attacks} at training time~\cite{Biggio2012PoisoningAA}, \emph{evasion attacks} at testing time~\cite{biggio2013evasion, Szegedy14}, and \emph{privacy attacks} to extract private information by interacting with the machine learning system~\cite{shokri2017membership, yeom2018privacy}. Below, we survey in more depth the different types of poisoning attacks and defenses in the literature.   

\vspace{-0.1cm}
\paragraph{Poisoning Availability Attacks.} The idea of tampering with the training data of an automated classifier to introduce failures in the final model has been the focus of many research efforts over time. Some of the early works in this area include attacks against polymorphic worms detectors~\cite{perdisci_misleading_2006}, network packet anomaly detectors~\cite{rubinstein2009antidote}, and behavioral malware clustering~\cite{biggio2014poisoning}. Availability attacks based on gradient descent have been proposed for multiple models, such as linear regression~\cite{xiao2015feature,jagielski2018manipulating}, logistic regression~\cite{mei15teaching}, and SVM~\cite{Biggio2012PoisoningAA}. These attacks have the goal to indiscriminately compromise the accuracy of the model. Regarding defenses, SEVER~\cite{diakonikolas2018sever} uses SVD to remove points which bias gradients, while TRIM~\cite{jagielski2018manipulating} and ILTM~\cite{shen2019learning} remove points with high loss. 
These defenses work iteratively, by identifying and removing outlying points at each step, until convergence is reached.
Demontis et al.~\cite{demontis19transfer} study the transferability of poisoning availability attacks. 

\vspace{-0.1cm}
\paragraph{Backdoor Attacks.} While \emph{red-herring} attacks~\cite{newsome_paragraph_2006} can be considered as precursors to \emph{backdoor} attacks, Gu et al.~\cite{gu2017badnets} is generally regarded as the first backdoor attack against modern neural networks. It identified a security issue with ML-as-a-service models, and involved generating poisoned data with a backdoor pattern to influence the model to classify incorrectly new backdoored testing points.  Successive work introduce \emph{clean-label} backdoor attacks which assume that the adversary does not control the labeling function~\cite{Turner2018CleanLabelBA}. Other applications of machine learning, such as Federated Learning models, have been shown to be vulnerable to backdoor attacks~\cite{bagdasaryan2018backdoor}. 
To defend against backdoor attacks, \cite{tran_spectral_2018} use SVD decomposition on the latent space learned by the network to develop an outlier score. \cite{liu_fine-pruning_2018} combines pruning and fine-tuning the network. \cite{wang_neural_2019} identify poisoning by measuring the minimum perturbation necessary to transform inputs into a target class.

\paragraph{Targeted Attacks.} Shafahi et al.~\cite{shafahi2018poison} introduce a clean-label, opti\-mization-based targeted poisoning attack. Suciu et al.~\cite{suciu2018does} study the transferability of targeted attacks. Schuster et al.~\cite{schuster2020humpty} show targeted poisoning attacks on  word embedding models used for NLP tasks.
Koh et al.~\cite{koh2017understanding} introduce an influence-based targeted attack and show an example on targeted attacks affecting multiple points at once. Koh et al.~\cite{koh2019accuracy} evaluate the effectiveness of influence functions for predicting how models change when multiple points are removed from a dataset. Our work uses larger datasets and models, and constructs attacks, which add poisoned points to influence predictions on the target subpopulation. Witches' Brew~\cite{geiping2020witches} uses a gradient matching loss, along with various optimization techniques, to perform targeted attacks which require little information of the learner's setup. We show in Section~\ref{sec:targeted} that our identification of subpopulations can be used to make targeted attacks more efficient, comparing to both \cite{koh2017understanding} and \cite{geiping2020witches}.

%\paragraph{Targeted Attacks.} Koh et al.~\cite{koh2017understanding} construct targeted attacks with influence functions. Shafahi et al.~\cite{shafahi2018poison} introduce a clean-label, optimization-based targeted poisoning attack. Suciu et al.~\cite{suciu2018does} study the transferability of targeted attacks. Schuster et al.~\cite{schuster2020humpty} show targeted poisoning attacks on  word embedding models used for NLP tasks.

\paragraph{Related Attacks - Table~\ref{tab:assumptions}} We outline in Table~\ref{tab:assumptions} a comparison with other attacks in prior work that are closer to the attacks we present. We include targeted attacks, such as \cite{bagdasaryan2018backdoor, geiping2020witches}, which are capable of attacking multiple points. These types of attacks do not generalize to new points, however--- they are only capable of harming the specific points the attack was generated for. The reflection backdoor~\cite{liu2020reflection} adds natural image modifications induced by light reflection. The composite backdoor~\cite{lin2020composite} adds natural modifications, such as certain faces, into a sample to induce misclassifications. These types of backdoors, while still using natural images, require test-time modifications to induce misclassification. Our attacks are the only which do not require test time misclassifications, generalize to unseen points, function in a variety of data modalities, and do not require knowledge of the exact training set. Other related attacks exist for specific settings. This includes Neural Trojans \cite{liu2017neural}, which present an attack on the MNIST dataset which controls the classification on other digits. Kulynych et al.~\cite{kulynych2020pots} propose ``protective optimization technologies'', where users may craft data to improve models' performance on specific groups; our work focuses on malicious data, where there are additional concerns, such as minimizing collateral damage, understanding defenses, and designing the target subpopulation. In concurrent work, Nolans et al.~\cite{solans2020poisoning} and Chang et al.~\cite{chang2020adversarial} demonstrate attacks on models specifically designed to satisfy algorithmic fairness restrictions, a consequence of our attacks we discuss in Section~\ref{sec:fairness}.
\section{Subpopulation Attacks}
\label{sec:algorithm}

In this section, we introduce the threat model, main definitions, and the principal features that characterize our newly proposed subpopulation attack.

\subsection{Threat Model}

% We consider a black-box adversary who knows nothing of the learning algorithm (model details, training procedure) and does not have access to the original training dataset $D$. We do, however, allow the adversary an auxillary dataset $D_{aux}$ which is distinct from $D$, but sampled from the same distribution. While this assumption could potentially be removed with a good generative model, we leave this exploration to future work.
% The adversary cannot modify points at test time, as in prior work on backdoor attacks (\cite{chen2017targeted} and \cite{gu2017badnets}).
% Contrary to threat models used in prior work on backdoor attacks (\cite{chen2017targeted} and \cite{gu2017badnets}), we consider an adversary who cannot modify points at test time.
% We believe this to be a much more realistic threat model.
% The adversary must also remain stealthy and practical by adding a small number of contaminants relative to the total dataset size.

Similar to most forms of poisoning attack, the goal of the adversary is to introduce a small quantity of contaminants in the data used to train a machine learning classification model in order to introduce a desired characteristic in the learned parameters.
We consider a realistic adversary who does not have access to the internal parameters of the victim model, and, similarly to availability poisoning attacks, cannot modify any data point submitted to the victim model at testing time.  Moreover, the adversary is unable to gain knowledge of the exact data points employed for training, and can modify the training set only by adding new points. This reflects the scenario in which the attacker can only disseminate the poisoned points, which will then be gathered, together with a large quantity of benign points, by the developers of the victim model to create the training set.
However, we allow the adversary to have the computational power required to train a model comparable to the victim one, and to have access to a separate \emph{auxiliary} dataset $D_{aux}$, distinct from the training data $D$, sampled from the same distribution. We also allow the adversary knowledge of the learner's loss function (we consider only the widely used cross entropy loss) and architecture (an assumption we remove in Section~\ref{sec:transfer}). The adversary has no knowledge of the victim model parameters, or the actual training data. 

Many poisoning attacks in the literature employ a white-box attack model, with full knowledge of the training set, model architecture and parameters~\cite{Biggio2012PoisoningAA,xiao2015feature,jagielski2018manipulating, koh2017understanding}, and many transfer the attack from another model~\cite{demontis19transfer, geiping2020witches}. 
% The only assumption made on the adversary is to have access to an auxiliary dataset $D_{aux}$, which we believe is reasonable given the availability of public datasets, and has been a common assumption for black box attacks in prior work~\cite{jagielski2018manipulating, papernot2017practical}.
Here, we make the assumption that the adversary is able to access an auxiliary dataset $D_{aux}$, which we believe is reasonable given the availability of public datasets, and has been a common assumption for black-box attacks in prior work~\cite{jagielski2018manipulating, papernot2017practical}. 
While this assumption could potentially be removed with a good generative model, we leave the exploration of this hypothesis to future work.  Finally, we consider stealthiness and practicality to be highly important for the attacker, and therefore, we assume the adversary will be able to poison only a small number of points in the training set.

\subsection{Definition}\label{subsec:definition}
We propose a new type of subpopulation attack, that is an interpolation in the space of poisoning attacks between a targeted attack (misclassifying a small fixed set of points) and an availability attack (misclassifying as many points as possible). To define our attack, we first provide a general, intuitive definition of a subpopulation:

\begin{definition}
A subpopulation of a data distribution is a restriction of its input domain to a set of points which are close to each other based on some distance function. % mention the preprocessing here
\end{definition}

This definition can capture multiple types of subpopulations. 
Individual fairness definitions from the algorithmic fairness literature correspond to subpopulations defined with $\ell_2$ distance, but may be unnatural for image data.
As we will see, subpopulations defined by $\ell_2$ distance in a trained model's representation space better capture image similarity. In algorithmic fairness, group fairness uses subpopulations defined by a distance function which is 0 for members of the same subpopulation and 1 for members of different subpopulations.

For a targeted subpopulation, the adversary's goal is twofold---impact the predictions on inputs coming from the subpopulation in the data, but do not impact the performance of the model on points outside this subpopulation. Crucially, this subpopulation consists of natural data, and does not require modifying points to observe the attack, as is the case for backdoor attacks. We allow the adversary to pick a subpopulation by selecting a \emph{filter function}, which partitions the population into the subpopulation to impact and the remainder of the data, whose performance should not change. Formally:
\begin{definition}{\emph{Subpopulation Attacks.}} Fix some learning algorithm $A$ and training dataset $D$ (not necessarily known to the adversary). A subpopulation attack consists of a dataset of contaminants $D_p$ and a \emph{filter function} $\mathcal{F}:\mathcal{X}\rightarrow\lbrace 0, 1\rbrace$ for defining subpopulations. $D_p$ is the poisoning set constructed to minimize the collateral damage and maximize the target damage on the subpopulation of interest when appended to the training set:
\begin{equation}
\begin{split}
    \textsc{Collat}(\mathcal{F}, D_p) = \mathbb{E}_{(x,y)\sim \mathcal{D}}[\mathbbm{1}\left(A(D\cup D_p)(x) \neq y\right) - \mathbbm{1}\left(A(D)(x) \neq y\right)~|~\mathcal{F}(x)=0]
\end{split}
\label{eq:maxclean}
\end{equation}

\begin{equation}
\begin{split}
    \textsc{Target}(\mathcal{F}, D_p) = \mathbb{E}_{(x,y)\sim \mathcal{D}}[\mathbbm{1}\left(A(D\cup D_p)(x) \neq y\right) - \mathbbm{1}\left(A(D)(x) \neq y\right)~|~\mathcal{F}(x)=1]
\end{split}
\label{eq:mintarget}
\end{equation}
\end{definition}

We will evaluate subpopulation attacks by reporting the collateral damage (\ref{eq:maxclean}) and the target damage (\ref{eq:mintarget}) on the unseen test set $D_{test}$. Crucially, we use $D_{test}$ to ensure that the attack generalizes to new points in the subpopulation, contrasting with prior attacks which only target a pre-specified set of points. A successful attack will have small collateral damage and large target damage, using few poisoning points.
Note that, under our definition, target damage simply seeks to maximize the classification error on the subpopulation, making it a \emph{class-untargeted} attack~\cite{carlini2017towards}; the definition can be easily modified to capture a \emph{class-targeted} attack, where samples from the subpopulation satisfying the filter function should be classified into a specific target class.

Subpopulation attacks are an interpolation between targeted poisoning attacks and availability poisoning attacks. The extreme case in which the filter selects a single point (or small set of points) corresponds to targeted poisoning attacks~\cite{shafahi2018poison}. When the filter function is defined to select the entire data domain, the attack is an availability attack~\cite{jagielski2018manipulating}. However, the most interesting subpopulation attacks, as we will demonstrate, use a relatively small number of poisoning points to attack a subpopulation of interest, while minimizing the collateral on other data. These subpopulation attacks are stealthy and hard to detect, in comparison with availability attacks, and have a potentially larger impact than a targeted attack.
The choice of filter function is as important to the adversary as the selection of contaminants. There may be some choices of filter function which result in subpopulations  harder to attack. For instance, a subpopulation closer to the decision boundary, on which the model has low confidence, may be attacked easier than a high-confidence subpopulation. 
In the next section, we will discuss our framework for generating these subpopulation attacks -  decomposed into subpopulation selection and attack generation.

\section{Methodology}
\label{sec:method}
\begin{figure*}[t]
    \centering
    \includegraphics[width=7in]{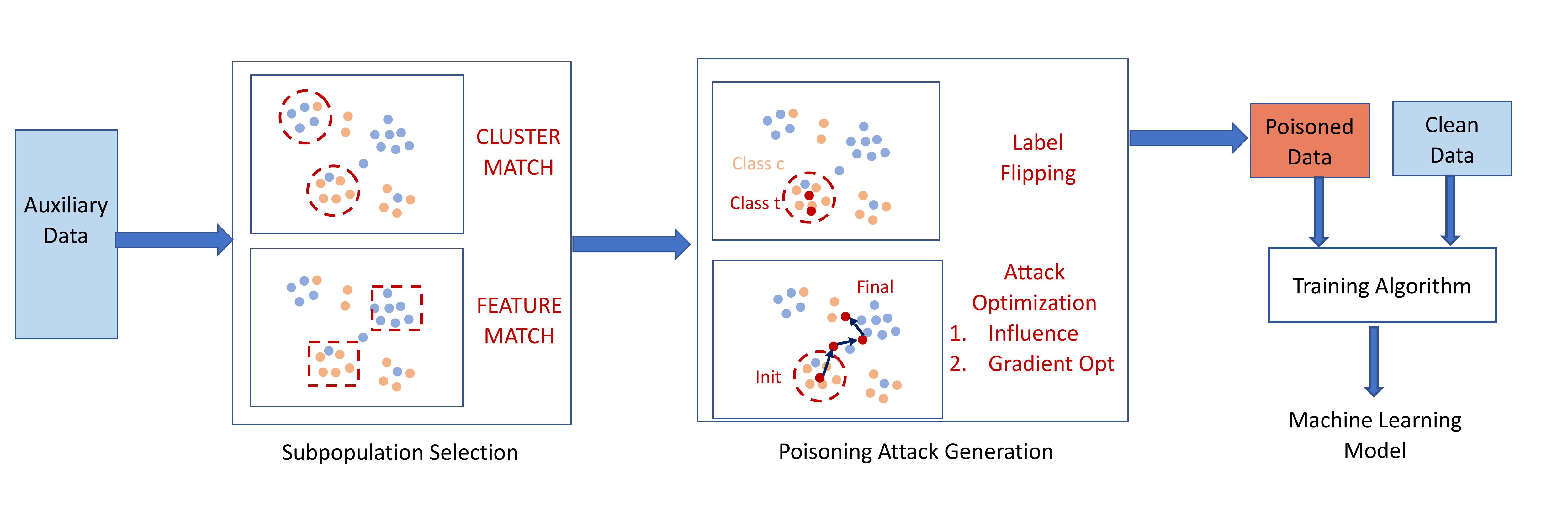}
    \caption{Overview of our subpopulation attack framework. The attacker has access to an auxiliary dataset, from which it can determine vulnerable subpopulation by using either \fmatch\ or \cmatch. Poisoning attack generation can be done by  label flipping (where a point drawn from a subpopulation with majority class $c$ is added with  label $t \neq c$), or with attack optimization (starting from label flipping, use either influence or gradient optimization for the final attack point).}
    \label{fig:overview}
\end{figure*}

There are two main components that contribute to a full-fledged subpopulation attack, as shown in Figure~\ref{fig:overview}: choosing a suitable filter function to select subpopulations, and selecting an algorithm to generate the poisoning data.
In this section we introduce our methodology  to obtain a complete subpopulation attack and describe the components in our framework.

\subsection{Subpopulation Selection}

The adversary first needs to identify their target subpopulation. This is the component that is typically avoided by existing attacks, which instead focusing on optimizing the attack data in order to best achieve an arbitrarily chosen objective. However, the selection strategy is important: in order to keep collateral small, one must select a subpopulation which can be separated from the rest of the distribution. Otherwise, regions of the distribution which are misclassified will include samples not satisfying the filter function. We propose two subpopulation selection approaches, called \fmatch\ and \cmatch.

\subsubsection{\fmatch}
%\giorgio{Before delving into the details, and stating the requirements, it may be better to introduce the intuition behind featurematch. Consider changing the first line of this paragraph with the line in blue}
%Our first filter function is \fmatch, which requires fine-grained annotation on input data.
This filter function aims at matching some set of specific features of the data, that the adversary may be interested in targeting a priori. To use this filter, in addition to having 
% In addition to receiving, for each example, feature values $x_i$ and labels $y_i$, we also assume we receive annotations $a_i$.
%In addition to having access to realistic data points $x_i$ and their respective labels $y_i$, we assume the attacker is also provided with, or able to generate, a set of annotations $a_i$.
 access to realistic data points $x_i$ and labels $y_i$, the adversary must have access to a set of annotations $a_i$ on these points.
These annotations represent structure that is finer grained or separate from the labels, such as race or age values for gender classification or color of automobiles in CIFAR-10. The annotations can be created manually by the adversary, or by selecting a subset of features in a tabular dataset (as we will see with the UCI Adult dataset).
\fmatch\ (Algorithm~\ref{alg:fmatch}) simply matches on the exact value of these annotations to identify the subpopulations. This is related to the standard definition for notions of group fairness in the algorithmic fairness literature~\cite{dwork2012fairness, hardt2016equality, buolamwini2018gender}.

\sloppy
Some attacks from the literature have run attacks like \fmatch. For example, Bagdasaryan et al.~\cite{bagdasaryan2018backdoor} attack CIFAR-10 models to target manually selected green cars. However, the attack considered in that work was not designed to generalize - they target a specific set of images, but don't evaluate on a holdout set of green cars to see how generalizable the attack was. By contrast, we measure \fmatch\ on how well it manages to attack test points from the subpopulation, ensuring that the attack is targeting the subpopulation as a whole and not those specific examples. Another example is the work of Schuster et al.~\cite{schuster2020humpty}, who propose attacks which can, as an application, attack machine translation, compromising all translations containing a specific word. This is a \fmatch\ attack, and they design a poisoning attack strategy specifically for this task. 
%\giorgio{Here it seems that we are going to evaluate \fmatch\ also on bert. Since we don't, we should probably rephrase this. What exactly do we want to say here?}
We will demonstrate that the generic attacks we develop here function on text classifiers trained using BERT as well.

\begin{algorithm}[t]
\begin{algorithmic}
\State Input: $X$ - features, $C$ - manual subpopulation annotations, $c_{att}$ - target subpopulation annotation
\State \Return $\filtfunc=\lambda~{x,y,c}:\mathbbm{1}(c=c_{att})$
\end{algorithmic}
\caption{\fmatch~Algorithm - leverage  data annotations}
\label{alg:fmatch}
\end{algorithm}

\subsubsection{\cmatch}
\label{sec:cmatch}
\begin{algorithm}[t]
\begin{algorithmic}
\State Input: $X \in D_{aux}$ - feature values;\hspace{0mm} $k_{\text{cluster}}$ - number of clusters; \hspace{0mm} $\textsc{PreProcess}$ - preprocessing function
\State $\text{centers} = \textsc{Cluster}(\textsc{PreProcess}(X), k_{\text{cluster}})$
\State $\text{target} = \textsc{PickCluster}(\text{centers})$
%\Comment can be random - see also Section~\ref{sec:cmatch}
\State \Return $\filtfunc = \lambda~x: \mathbbm{1}(\textsc{ClosestCenter}(\textsc{PreProcess}(x),$
\State $\text{centers})==\text{target})$

\end{algorithmic}
\caption{\cmatch~Algorithm - automatically identify subpopulations}
\label{alg:cmatch}
\end{algorithm}
Our next filter function, \cmatch\ (Algorithm~\ref{alg:cmatch}), replaces the need for annotation with clustering to identify subpopulations of interest. By identifying natural clusters in the data, one can compromise the model for one cluster but not elsewhere. In \cmatch\ the attacker uses the auxiliary dataset $D_{aux}$ for clustering and identifying the most vulnerable subpopulations. 
%In \cmatch\ (see Algorithm~\ref{alg:cmatch}), the adversary produces clusters and targets only a specific cluster.

There are various design decisions that need to be taken care of before we can use \cmatch. We must specify a preprocessing function $\textsc{PreProcess}$ applied to the auxiliary data, and a clustering algorithm. For preprocessing phase, we first use the representation layer of a neural network trained on $D_{aux}$ (we test which layer is most effective in Section~\ref{sec:exp}), and then apply a PCA projection. For clustering, we use KMeans, but any procedure for generating meaningful clusters on a given dataset should work. 

%When evaluating \cmatch, we experiment with multiple values for the number of subpopulation, and present our results with 100 subpopulations.

%When evaluating \cmatch, we use 100 subpopulations for all datasets. We select this number arbitrarily, but, in practice, an adversary can experiment with multiple values or use their prior knowledge to use a value which reflects the nature of the data distribution.

Interestingly, \cmatch\ can also be used in cases in which the adversary has a targeted subpopulation in mind.  For example, consider an adversary who wishes to disrupt street sign detection in a self-driving car through a subpopulation attack---\cmatch\ would help identify vulnerable street signs which will be easiest to target, increasing the impact and stealth of their attack. In general, an adversary can generate a clustering and identify a cluster that is both aligned with their goals and will be easy to attack. We show in the Appendix~\ref{app:examples} that subpopulations generated with \cmatch\ can be semantically meaningful.

\subsection{Poisoning Attack Generation}
\label{sec:att_make}

\subsubsection{Label Flipping}
\label{ssec:randflip}
For our poisoning attack generation, we begin by adapting a common baseline algorithm, label flipping, to our setting. Label flipping has been used in poisoning availability attacks~\cite{xiao2015feature} to create poisoning points that have similar feature values with legitimate data, but use a different label.

If the subpopulation size is $m$, and the adversary uses a poisoning rate $\alpha$ relative to the subpopulation, they add $\alpha m$ poisoned points, which should be small relative to the entire dataset size. In label flipping attacks, these points are generated by sampling $\alpha m$ points satisfying the filter function from $D_{aux}$ and adding these to the training set with a label $t$ different from the original one $c$. We choose a single label for the whole subpopulation, which maximizes the loss on the poison point.
Label flipping ensures high target damage, while the filter function itself is what guarantees low collateral---if it is a good enough separation, then the learning algorithm will be able to independently learn the poisoned subpopulation, without impacting the rest of the distribution. 

While simple, this attack is very general and applicable to various data modalities, including images, text, and tabular data, as our experiments will show.
To demonstrate the modularity of our attack framework, we show that leveraging optimization techniques such as influence functions, following the results of \cite{koh2017understanding}, and gradient optimization, can improve the effectiveness of our attacks. 

\subsubsection{Attack Optimization}
\label{ssec:att_opt}
In order to optimize points generated by label flipping, we follow Koh and Liang~\cite{koh2017understanding}.
They propose influence functions to understand the impact of training samples on trained models, and demonstrates as an application an optimization procedure for generating poisoning data.
To increase the loss on a test point $x_{test}, y_{test}$, by modifying a training point $x, y$ by a $\delta$ perturbation $x+\delta, y$, \cite{koh2017understanding} use gradient descent on the influence $\mathcal{I}$, with the following update, to optimize poisoning data:
\begin{equation}
    \nabla_x \mathcal{I}(x_{test}, y_{test}) = -\nabla_{\theta} L(x_{test}, y_{test}, \theta)^T H_{\theta}^{-1} \nabla_x \nabla_{\theta} L(x, y, \theta)
\end{equation}
This is derived by approximating the model's loss function with a quadratic function around the unpoisoned parameters $\theta$. The curvature of the quadratic function is described by the Hessian around $\theta$, $H_{\theta}$. In order to use the Hessian for even small models and datasets, \cite{koh2017understanding} use various optimizations and approximations; using Hessians is computationally expensive, and dominates the attack's running time. Indistinguishable attacks are a goal of \cite{koh2017understanding}; subpopulation attacks do not have this limitation: we only require that inputs remain valid (for images, pixels remain in [0, 255] bounds).

We will experiment with attacks based on influence functions, which we extend to subpopulations (they were originally used for targeted poisoning). We also propose a \emph{Gradient Optimization} attack or GO, in which we relax the quadratic approximation to a linear approximation, trading off  effectiveness for increased efficiency. This approximation assumes the poisoned model is computed as a single gradient step from an unpoisoned model, where the gradient is computed on both the clean training set $X, Y = \lbrace x_i, y_i\rbrace_{i=1}^n$ and poisoned training set $X_p, Y_p = \lbrace x^p_i, y^p_i\rbrace_{i=1}^m$:
\begin{equation}
    \begin{split}
        \theta_{new} = \theta_{old} & - \frac{\eta}{n+m}\sum_{i=1}^{n}\nabla_{\theta}L(x_i, y_i, \theta_{old})\\ & - \frac{\eta}{n+m}\sum_{i=1}^m\nabla_{\theta}L(x^p_i, y^p_i, \theta_{old}).
    \end{split}
    \label{eq:onestep}
\end{equation}

We hope to induce a modification $\theta_{new} - \theta_{old}$ that will most increase the loss on a target dataset $X_a, Y_a$ selected from $D_{aux}$. As $\eta\rightarrow 0$, the best such modification is $-\nabla_{\theta}L(X_a, Y_a, \theta)$. Then our goal is to maximize $Obj(X_p, Y_p) = -\nabla_{\theta}L(X_a, Y_a, \theta)^T(\theta_{new} - \theta_{old})$. Combining this with Equation~\ref{eq:onestep}, our optimization process is:
\[
X_p, Y_p = \argmax_{X_p, Y_p}\nabla_{\theta}L(X_a, Y_a, \theta_{old})\nabla_{\theta}L(X_p, Y_p, \theta_{old}).
\]

To solve this, we follow standard procedure by using gradient descent. We start with a poisoning set generated by label flipping, and run 50 steps of gradient descent to update $X_p, Y_p$. The gradient update on $\nabla_{X_p}$ for this optimization procedure is
\[
-\nabla_{\theta}L(X_a, Y_a, \theta_{old})^T\nabla_{X_p}\nabla_{\theta}L(X_p, Y_p, \theta_{old}),
\]
which is equivalent to approximating the influence function update with $H_{\theta}=I$. Intuitively, this allows us to maintain good performance without the expensive Hessian operations required for influence functions. There are many other optimization-based procedures for poisoning attacks (e.g., \cite{Biggio2012PoisoningAA, shafahi2018poison, geiping2020witches}), differing in objective function, threat model, and initialization strategy. It may be possible to adapt these strategies to subpopulation attacks, as well.

A gradient-based optimization approach assumes the features can be modified as continuous values. For tabular datasets with categorical features and feature dependencies, or text datasets, these continuous modifications are difficult, and even generating adversarial examples requires a significant amount of work. The original influence attack~\cite{koh2017understanding} was tested only on image data. For this reason, we implement optimization-based attacks based on influence and gradient optimization for image datasets only. We believe that extending existing optimization attacks to text or tabular data~\cite{koh2018stronger} is an interesting avenue of future work. 

\subsection{Subpopulation Attack Framework}
\begin{algorithm}[t]
\begin{algorithmic}
\State Input: Adversarial knowledge $K_{adv}$, attack size $m$
\State $\text{FilterFunctions}=\textsc{MakeFilterFunctions}(K_{adv})$ \Comment{e.g., Algorithm~\ref{alg:fmatch} or~\ref{alg:cmatch}}
\State $\mathcal{F} = \textsc{SelectFilterFunction}(\text{FilterFunctions}, K_{adv})$
\State \Return $\textsc{GenerateAttack}(\mathcal{F}, m, K_{adv})$ \Comment{see Section~\ref{sec:att_make}}
\end{algorithmic}
\caption{Generic Subpopulation Attack. In this work, $K_{adv}$ consists only of a the dataset $D_{aux}$.}
\label{alg:generalsubatt}
\end{algorithm}

The full attack operates in three distinct steps. It starts by identifying subpopulations in the data with \fmatch\ or \cmatch, continues by selecting a target subpopulation, and ultimately generating the actual poisoned points to inject into the training set, as shown in Figure~\ref{fig:overview} and Algorithm~\ref{alg:generalsubatt}.
In order to generate the contaminants, the adversary could use a basic, yet generic, approach such as label flipping, or employ a stronger model-specific approach like influence functions or gradient optimization.
Similarly, different clustering methods could be leveraged to identify potential victim subpopulations in the first step.
Due to the modularity of the framework, each component (e.g., subpopulation selection, poisoning points generation) may be improved separately, and in ways that may more closely match specific domains of interest.

\section{Attack Experiments}
\label{sec:exp}
In this section, we explore the threat of subpopulation attacks on real datasets. 
We first explore the effectiveness of the label flipping attack in the end-to-end training scenario, comparing \fmatch\ and \cmatch\ for subpopulation selection. Then, we show that label flipping subpopulation attacks are also effective in transfer learning scenarios. We run experiments on four datasets from three modalities, to demonstrate the generality of our attack.
For three of these datasets, we measure the attack's success on both small and large models. Large models are fine tuning all the layers of the neural networks and have much larger capacity (e.g., 134 million parameters for VGG-16). We also evaluate the two optimization attacks on the face recognition dataset.
We believe the breadth of our experiments provides compelling evidence that subpopulation attacks are a useful and practical threat model for poisoning attacks against ML. 

%Our experiments include both end-to-end training experiments and transfer learning experiments.

We evaluate our attacks using a general approach. We partition standard datasets into a training set $D$, an auxiliary set $D_{aux}$, and a test set $D_t$, all being disjoint. The adversary only ever has access to $D_{aux}$. The adversary uses $D_{aux}$ to generate subpopulations, training a surrogate model when necessary for \cmatch. The adversary generates the poisoned data $D_p$, and the model is trained on $D \cup D_p$, and the target damage is evaluated only on test points from $D_t$ belonging to the target subpopulation. 

%For three of these datasets, we measure the attack's success on both small and large models. Our experiments include both end-to-end training experiments and transfer learning experiments.

%how best to generate subpopulations for \cmatch. Next, we will see how effective \fmatch\ is on two datasets and \cmatch\ on four datasets, allowing us to compare their efficacy. Finally, we will explore the generality of the confidence heuristic. 

\subsection{Datasets and Models}
Here we will provide a brief overview of the various datasets, models, and hyperparameters used for our experiments.

\noindent
\textbf{CIFAR-10.} CIFAR-10~\cite{Krizhevsky09learningmultiple} is a medium-size image classification dataset of 32x32x3 images split into 50000 training images and 10000 test images belonging to one of 10 classes. We use CIFAR-10's standard split, splitting the train set into 25000 points for $D$ and 25000 points for $D_{aux}$.
We use two ML classifiers: \scifar\ and \lcifar. \scifar\ is a small convolutional neural network, consisting of a convolutional layer, three blocks consisting of 2 convolutional layers and an average pooling layer each, and a final convolution and mean pooling layer, trained with Adam at a learning rate of 0.001. \lcifar\ fine tunes all layers of a VGG-16 model pretrained on ImageNet for 12 epochs with Adam with a learning rate of 0.001.

\noindent
\textbf{UTKFace.} UTKFace~\cite{zhifei2017cvpr} is a facial recognition dataset, annotated with gender, age, and race (the races included are White, Black, Asian, Indian, and Other, which contains Latino and Middle Eastern images). We use it for gender classification, removing children under age 15 to improve performance, leaving 20054 images. We then split it into $D$,  $D_{aux}$, and $D_{t}$ with 7000, 7000, and 6054 images, respectively.
For UTKFace, we use two ML models: \sutk\ and \lutk. \sutk\ only trains the last layer of a VGG-16 model~\cite{simonyan2014very}, pretrained on the ImageNet dataset~\cite{imagenet_cvpr09} (8K parameters), while \lutk\ trains all layers (134M parameters). For both, we train for 12 epochs with Adam, using a learning rate of 0.001 for \sutk\ and 0.0001 for \lutk. For both, we use an $\ell_2$ regularization of 0.01 on the classification layer to mitigate overfitting.

\noindent
\textbf{IMDB Reviews.} The IMDB movie review dataset~\cite{maas-EtAl:2011:ACL-HLT2011} consists of 50000 reviews of popular movies left by users on the IMDB\footnote{\url{https://www.imdb.com/}} website, together with the reviews' scores. The dataset is often used for binary sentiment classification, predicting whether the review expresses a positive or negative sentiment.
Given the rise in popularity of pre-trained models (BERT~\cite{devlin2018bert}, GPT-2~\cite{radford2019language}, XLNet~\cite{yang2019xlnet}) for natural language modeling, and the high variance of the data used to train them, they provide a perfect target for subpopulation attacks.
For this dataset we split the training set into 12500 points for $D$ and 12500 for $D_{aux}$.
We use BERT for our experiments, followed by a single classifier layer. Our small model, \sbert, only fine-tunes BERT's final transformer block and the last layer (classifier), while our large model, \lbert, fine-tunes all of BERT's transformer blocks together with the classifier.
Both models are trained on vectors of 256 tokens, for 4 epochs, with a learning rate of $10^{-5}$ and mini batch size of 8.
These models use the same architecture and implementation from the Huggingface Transformers library \cite{Wolf2019HuggingFacesTS}.
They are both based on a pre-trained \verb|bert-base-uncased| instance~\cite{devlin2018bert}, with 12 transformer blocks (110M parameters), and one linear layer for classification (1538 parameters).

\noindent
\textbf{UCI Adult.} UCI Adult~\cite{Dua:2019} consists of 48843 rows, where the goal is to use demographic information to predict whether a person's income is above $\$50K$ a year. We drop the 'education', 'native-country', and 'fnlwgt' columns due to significant correlation with other columns, and apply one-hot encoding to categorical columns.
For UCI Adult, we use a feed-forward neural network with one hidden layer of 10 ReLU units, trained for a maximum of 3000 iterations using scikit-learn~\cite{scikit-learn} default settings for all other parameters. 

\begin{table}[]
    \renewcommand\arraystretch{1.15}
    \centering
    %\small
    \begin{tabular}{c c | c c c c}
        \hline
        \multirow{2}{*}{Dataset} & \multirow{2}{*}{Worst} & \multirow{2}{*}{Clean Acc} & \multicolumn{3}{c}{Target Damage} \\
         &  &  & $\alpha=0.5$ & $\alpha=1$ & $\alpha=2$ \\
        \hline
        \multirow{3}{*}{\shortstack{UTKFace \\ \sutk}} & 10 & \multirow{3}{*}{0.846} & 0.054 & 0.086 & 0.144 \\
         & 5 & ~ & 0.094 & 0.140 & 0.192 \\
         & 1 & ~ & 0.400 & 0.400 & 0.400 \\
        \hline
        \multirow{3}{*}{UCI Adult} & 10 & \multirow{3}{*}{0.837} & 0.103 & 0.148 & 0.16 \\
         & 5 & ~ & 0.143 & 0.21 & 0.195 \\
         & 1 & ~ & 0.311 & 0.467 & 0.250 \\
    \end{tabular}
    \caption{Clean accuracy and target damage for \fmatch\ attacks with label flipping, reported over the worst 1, 5, and 10 subpopulations.}
    \label{tab:fmatch}
    \vspace{-5pt}
\end{table}

\begin{table}[t]
    \renewcommand\arraystretch{1.15}
    \centering
    % \small
    \begin{tabular}{c | ccccc}
        \hline
        Task & Input & Layer 1 & Layer 2 & Layer 3 & Layer 4 \\
        \hline
        Worst-5 & 0.187 & 0.290 & 0.313 & 0.305 & 0.327 \\
        Worst-10 & 0.157 & 0.257 & 0.270 & 0.283 & 0.297 \\
    \end{tabular}
    \caption{Results for \cmatch\ with Label Flipping on CIFAR-10 + \scifar\ using six different layers for clustering. Both worst-5 and worst-10 results improve monotonically in the layer number, indicating that the more useful the representation is to the model, the easier it is to attack, as well.}
    \label{tab:layers}
    \vspace{-5pt}
\end{table}

\subsection{Label Flipping on End-to-end Training}

We demonstrate that subpopulation attacks based on label flipping are effective for end-to-end training of neural networks, which has been a difficult setting for poisoning attacks on neural networks. Backdoor poisoning attacks need a fairly large amount of poisoning data to be effective~\cite{gu2017badnets}, while targeted attacks against  neural networks trained end-to-end  need on the order of 50 points to attack a single point at testing~\cite{shafahi2018poison, geiping2020witches}.  We investigate our attacks on CIFAR-10 and UCI Adult. We train \scifar\ on CIFAR-10 and the UCI Adult model end-to-end. We test \fmatch\ on UCI Adult, with the combination of education level, race, and gender as annotations. We also use KMeans for \cmatch\ on both UCI Adult and CIFAR-10 with $k=100$ clusters (we discuss briefly results for other values of $k$). For UCI Adult, due to large variance in \fmatch\ subpopulation sizes, we remove subpopulations with greater than 100 or less than 10 data points. We report \fmatch\ results in Table \ref{tab:fmatch}, and \cmatch\ results in Table \ref{tab:cmatch}.

\paragraph{Effectiveness of Attacks}
\fmatch\ causes a target damage of 25\% for one subpopulation on UCI Adult and an average of 19.5\% over five subpopulations, when attacked with a poisoning rate of $\alpha=2$. The attacked groups all have comparable ages, genders, and races, impacting the group fairness of the classifier. There are some results where increasing $\alpha$ does not strictly increase attack performance, due likely to unintuitive effects from optimizing nonconvex loss functions (all of our results with convex loss functions have monotone performance increases).

We can only run \fmatch\ on datasets with annotations: UTKface and UCI Adult. We find \cmatch\ is nearly always more effective than \fmatch. On UCI Adult, with a poisoning rate of $\alpha=2$, one subpopulation reaches 66.7\% target damage (with 48 poisoned points), and five subpopulations reach an average of 36.7\% target damage (with an average of 45 poisoned points). On CIFAR-10, the attack is very effective, reaching 23.6\% target damage on one subpopulation at only a poisoning rate of $\alpha=.5$ and $53.5\%$ at $\alpha=2$. The collateral is also low, on average 1.41\% for the top 5 subpopulations by target damage. This is convincing evidence that \cmatch's ability to leverage the data to construct subpopulations allows it to produce more effective subpopulations. 

\paragraph{Optimizing \cmatch}
We experiment with designing good \cmatch\ clusters on CIFAR-10, by running KMeans to produce 100 subpopulations at six different layers of \scifar. For all layers, we project to 10 dimensions using PCA. We find that \cmatch\ is most effective when using clusters constructed using the last layer of \scifar. Notably, the top 5 clusters using the last layer representation achieve 32.7\% target damage, while \cmatch\ using the input features only reaches 18.7\% target damage, as shown in Table~\ref{tab:layers}. There is a monotonic increase in the target damage as the layer number increases, indicating, intuitively, that the more useful representations to the model are more effective for subpopulation attacks, as well. We use this insight for all of our remaining experiments. 

\paragraph{Subpopulation size} We also varied the number of clusters within $\lbrace 50, 100, 200, 400\rbrace$, and found little significant difference between these values. If the number of clusters is too small, however, the impact on overall accuracy may be noticeable, and if the number of clusters is too large, the subpopulation impacted will consist of very few points, making the impact of the attack itself limited (at 400 subpopulations, for example, some subpopulations do not appear in the test set). We decided to report all results for $k=100$ clusters. 

%a mean target damage of 1.24\% over the 20 highest confidence subpopulations, and 4.51\% over the 20 lowest confidence subpopulations. \alina{Here we could report top 10 clusters}

\iffalse
\begin{figure}
    \begin{subfigure}{.5\linewidth}
        \centering
        \includegraphics[scale=0.5]{}
        \caption{For the 20 highest confidence subpopulations.}
        \label{fig:err-layer-high}
    \end{subfigure}
    \begin{subfigure}{.5\linewidth}
        \centering
        \includegraphics[scale=0.5]{}
        \caption{For the 20 lowest confidence subpopulations.}
        \label{fig:err-layer-low}
    \end{subfigure}
    \caption{Mean error before and after poisoning for each clustering layer.}
    \label{fig:construct-subpop}
\end{figure}
\fi

\subsection{Label Flipping on Transfer Learning}

\begin{table}[]
    \renewcommand\arraystretch{1.2}
    \centering
    %\small
    \begin{tabular}{c | cccc}
        \hline
        Models & CIFAR-10 & UCI Adult & UTKFace & IMDB \\
        \hline
        % Small & 1.4\% & 1.4\% & 0.3\% & -0.25\% \\
        % Large & 1.3\% & N/A & 2.9\% & 0.05\% \\
        Small & 1.4\% & 1.4\% & 0.3\% & -0.29\% \\
        Large & 1.3\% & N/A & 2.9\% & 0.53\% \\
    \end{tabular}
    \caption{Mean collateral for Label Flipping poisoning attacks on worst 5 subpopulations by target damage with $\alpha=1$. Small models: \scifar\ for CIFAR-10, \sutk\ for UTKFace, \sbert\ for IMDB. Large models: \lcifar\ for CIFAR-10, \lutk\ for UTKFace, \lbert\ for IMDB.}
    %On IMDB, the attack collateral is typically very small. The collateral is worst for large models trained on UTKFace, but this is still fairly small at 2.9\%.}
    \label{tab:collat}
    \vspace{-5pt}
\end{table}

\begin{table}[t]
    \renewcommand\arraystretch{1.15}
    \centering
    \tabcolsep=0.15cm
    %\small
    \begin{tabular}{c c | c c c c c}
        \hline
        \multirow{2}{*}{Dataset} & \multirow{2}{*}{Worst} & \multirow{2}{*}{Clean Acc} & \multicolumn{3}{c}{Target Damage} & \multirow{2}{*}{\shortstack{Subpop \\Size}} \\
         &  &  & $\alpha=0.5$ & $\alpha=1$ & $\alpha=2$ & ~ \\
        \hline
        \multirow{3}{*}{\shortstack{UTKFace \\ \sutk}} & 10 & \multirow{3}{*}{0.846} & 0.073 & 0.128 & 0.294 & 60.7 \\
         & 5 & ~ & 0.094 & 0.132 & 0.335 & 47.8 \\
         & 1 & ~ & 0.222 & 0.222 & 0.556 & 40.3 \\
        \hline
        % \multirow{3}{*}{IMDB + \sbert} & Worst-10 & \multirow{3}{*}{0.893} & 0.084 & 0.109 & 0.132 & 144.3\\
        %  & Worst-5 & ~ & 0.124 & 0.162 & 0.187 & 160.4\\
        %  & Worst-1 & ~ & 0.193 & 0.220 & 0.241 & 113.0\\
        % \hline
        \multirow{3}{*}{\shortstack{IMDB\\ \sbert}} & 10 & \multirow{3}{*}{0.889} & 0.010 & 0.018 & 0.036 & 158.4 \\
         & 5 & ~ & 0.014 & 0.029 & 0.061 & 171.3 \\
         & 1 & ~ & 0.038 & 0.049 & 0.203 & 207.3 \\
        \hline
        \multirow{3}{*}{\shortstack{CIFAR-10 \\ \scifar}} & 10 & \multirow{3}{*}{0.803} & 0.136 & 0.297 & 0.418 & 111.5 \\
         & 5 & ~         & 0.163 & 0.327 & 0.480 & 120.9 \\
         & 1 & ~         & 0.236 & 0.348 & 0.535 & 126.8 \\
        \hline
        \multirow{3}{*}{UCI Adult} & 10 & \multirow{3}{*}{0.837} & 0.121 & 0.24 & 0.224 & 41.1 \\
         & 5 & ~ & 0.191 & 0.343 & 0.367 & 45.1 \\
         & 1 & ~ & 0.333 & 0.375 & 0.667 & 48.3 \\
    \end{tabular}
    \caption{Clean accuracy and target damage for small models attacked with \cmatch\ with Label Flipping, reported over the worst 1, 5, and 10 subpopulations.
    On BERT we measure the performance sampling 10 clusters at the lowest, medium, and highest confidence levels,
    due to running time constraints. Subpopulation sizes (the column Size) are averages over poison rates.} %Many of these subpopulations are significantly damaging, including a subpopulation getting 66.7\% target damage with $\alpha=2$ on UCI Adult, and a subpopulation with 55.6\% target damage with $\alpha=2$ on UTKFace, both with under 100 attack points.}
    % we can remove clean accuracy
    \label{tab:cmatch}
    \vspace{-5pt}
\end{table}

\begin{table}[]
    \renewcommand\arraystretch{1.15}
    %\small
    \tabcolsep=0.15cm
    \centering
    \begin{tabular}{c c | c cc cc cc c}
        \hline
        \multirow{2}{*}{Dataset} & \multirow{2}{*}{Worst} & \multirow{2}{*}{Clean Acc} & \multicolumn{3}{c}{Target Damage} & \multirow{2}{*}{Size} \\
         &  &  & $\alpha=0.5$ & $\alpha=1$ & $\alpha=2$ & ~ \\
        \hline
        \multirow{3}{*}{\shortstack{UTKFace \\ \lutk}} & 10 & \multirow{3}{*}{0.963} & 0.218 & 0.329 & 0.405 & 57.3 \\
         & 5 & ~ & 0.244 & 0.385 & 0.432 & 38.1 \\
         & 1 & ~ & 0.286 & 0.500 & 0.455 & 29.0 \\
        \hline
        % \multirow{3}{*}{IMDB + \lbert} & Worst-10 & \multirow{3}{*}{0.913} &  0.051 & 0.100 & 0.205 & 134.3 \\
        %  & Worst-5 & ~ & 0.096 & 0.149 & 0.249 & 163.5 \\
        %  & Worst-1 & ~ & 0.209 & 0.237 & 0.346 & 82.3 \\
        % \hline
        \multirow{3}{*}{\shortstack{IMDB \\ \lbert}} & 10 & \multirow{3}{*}{0.913} & 0.024 & 0.080 & 0.206 & 148.5 \\
         & 5 & ~ & 0.035 & 0.129 & 0.303 & 136.2 \\
         & 1 & ~ & 0.051 & 0.204 & 0.506 & 129.0 \\
        \hline
        \multirow{3}{*}{\shortstack{CIFAR-10 \\ \lcifar}} & 10 & \multirow{3}{*}{0.863} & 0.206 & 0.518 & 0.511 & 175.6 \\
         & 5 & ~ & 0.294 & 0.616 & 0.627 & 180.9 \\
         & 1 & ~ & 0.426 & 0.738 & 0.742 & 144.0 \\
    \end{tabular}
    \caption{Clean accuracy and target damage for large models trained on datasets that have been attacked with \cmatch\ and Label Flipping, reported over the worst 1, 5, and 10 subpopulations.
    On BERT we measure the performance sampling 10 clusters at the lowest, medium, and highest confidence levels,
    due to running time constraints. Subpopulation sizes are averages over poison rates.
    These attacks are often very damaging: 10 subpopulations from UTKFace reach an average target damage of 40.5\%, and 10 subpopulations on CIFAR-10 reach an average target damage of 51.1\%. Results on IMDB are also markedly better with \lbert\ than on \sbert, with little collateral damage.
    }
    \label{tab:cmatch-big}
    \vspace{-5pt}
\end{table}

We evaluate the effectiveness of Label Flipping subpopulation attacks on transfer learning, on CIFAR-10, UTKFace, and IMDB. We run \fmatch\ on \sutk\ on UTKFace, using the combination of race and bucketed ages (bucketed into $[15, 30]$, $[30, 45]$, $[45, 60]$, $[60, \infty)$) as the annotations, presenting results in Table \ref{tab:fmatch}. We present results for \cmatch\, with 100 clusters generated with KMeans, on \sutk\ on UTKFace and \sbert\ in Table \ref{tab:cmatch}, and present results for large models in Table \ref{tab:cmatch-big}.

% We find \fmatch\ is effective on UTKFace, reaching a target damage of 40\% for one subpopulation and an average of 19.2\% over five subpopulations, both at a poisoning rate of $\alpha=2$. However, \cmatch\ reaches a target damage of 55.6\% on one subpopulation and an average of 33.5\% over five subpopulations. This reiterates the results from end-to-end training on UCI Adult: \cmatch\ tends to outperform \fmatch\ when both are applicable, but both are effective.

% We also find that \cmatch\ is effective in general. On the IMDB data with \sbert, for example, \cmatch\ manages to cause a target damage of 24.1\% in one subpopulation, and an average of 18.7\% over five subpopulations when $\alpha=2$, and on \lbert\ the attack has even better performance, with an average of 24.9\% over five subpopulations. The maximum registered collateral damage for these attacks was 0.34\% and 0.85\%, for \sbert\ and \lbert\ respectively, with $\alpha=2$.

We find \fmatch\ is effective on UTKFace, reaching a target damage of 40\% for one subpopulation and an average of 19.2\% over five subpopulations, both at a poisoning rate of $\alpha=2$. However, \cmatch\ reaches a target damage of 55.6\% on one subpopulation and an average of 33.5\% over five subpopulations. This reiterates the results from end-to-end training on UCI Adult: \cmatch\ tends to outperform \fmatch\ when both are applicable, but both are effective.
On the IMDB data with \sbert, \cmatch\ appears to rather ineffective until the poisoning attack gets large, causing a target damage of 20.3\% in one subpopulation, when $\alpha=2$, but under 5\% when $\alpha=0.5$ or $\alpha=1$. However, on \lbert\ the attack has markedly better performance, with peak target damage of $\approx 20\%$ already at $\alpha=1$, which grows to an average of $30.3\%$ for $\alpha=2$, over five subpopulations, peaking at $50.6\%$ on the most effected one. The maximum registered collateral 
% damage for these attacks was 0.34\% and 0.85\%, for \sbert\ and \lbert\ respectively, with
damage for these attacks was 0.16\% and 0.99\%, for \sbert\ and \lbert\ respectively, with
$\alpha=2$.

For other large models, we find that the attacks are also very effective, despite the higher base accuracy, with many subpopulations also from the UTKFace and CIFAR-10 datasets significantly impacted. With an average of only 38.1 poison points, five subpopulations from UTKFace have an average target damage of 38.5\% (with $\alpha=1$), and still has a significant target damage of 24.4\% with only 19.1 poison points on average ($\alpha=.5$). On CIFAR-10, we observe particularly damaging attacks: five subpopulations, when attacked with $\alpha=1$, reach an average target damage of 61.6\%. Generally, attacks on the larger models are more effective than attacks on smaller models. This may be an indication that larger models learn more subpopulation-specific decision boundaries.

We present in Table~\ref{tab:collat} the collateral of our poisoning attacks, measuring the mean collateral damage over the worst-5 subpopulations by target damage. The worst error decay occurs in UTKFace with the fine tuned model, decreasing the accuracy by 2.9\% accuracy while these subpopulations had accuracy decrease by 38.5\%. Meanwhile, the collateral is typically around 1.5\% for other datasets and models, and is insignificant for the last layer transfer learned model on UTKFace, and for both models on the IMDB dataset.

\subsection{Fairness Case Study}
\label{sec:fairness}
Our work is a proof-of-concept of poisoning attacks impacting algorithmic fairness, a goal which has been attempted directly by~\cite{chang2020adversarial, solans2020poisoning}. This is most immediately seen when considering specific examples of \fmatch\ attacks. For example, on UCI Adult, the subpopulation consisting of Black women high school graduates drops from an accuracy of 91.4\% to 76.7\% when the attack has size $\alpha=2$. On UTKFace, the subpopulation which is easiest to attack consists of people >60 years old and who are Latino/Middle Eastern; this subpopulation's accuracy drops from 100\% to 60\%. Images containing white people from 30-45 years old have their accuracy drop by 15.2\%, in contrast. Some subpopulations generated by \cmatch\ may also have fairness implications due to correlation with sensitive attributes, shown in Appendix~\ref{app:examples}.

Aside from being a fairness concern, it is interesting that the model is able to be compromised by \fmatch\ on UTKFace; here, the model only has access to pixels, and does not explicitly use demographic information. This indicates these demographics are learned by the model independently of their correlation with the target class, highlighted as a privacy risk in prior work~\cite{song2019overlearning, mireshghallah2020principled}.

\subsection{Subpopulation Transferability}
\label{sec:transfer}
In Table~\ref{tab:transfer}, we present results attacking CIFAR-10 + \lutk\ with \cmatch\, with subpopulations generated with ResNet-50 embeddings. This allows us to test whether knowledge of the learner's model architecture is required to run \cmatch\ attacks. We find that the attack is less successful when using transferred embeddings, but is still effective --- most subpopulations exhibit 15-16\% target damage, while causing no more than 2\% collateral damage. While knowledge of architecture is helpful for generating \cmatch\ subpopulations, it is not necessary.

\begin{table}[]
    \renewcommand\arraystretch{1.15}
    \centering
    %\small
    \begin{tabular}{c c | c c c c}
        \hline
        \multirow{2}{*}{Data + Model} & \multirow{2}{*}{Worst} & \multicolumn{3}{c}{Target Damage} & \multirow{2}{*}{Size} \\
         &  & $\alpha=0.5$ & $\alpha=1$ & $\alpha=2$ & ~ \\
        \hline
        \multirow{3}{*}{CIFAR-10 + \lcifar} & 10 & 0.156 & 0.155 & 0.159 & 111.5 \\
         & 5 & 0.159 & 0.157 & 0.161 & 119.3\\
         & 1 & 0.165 & 0.161 & 0.166 & 157.3 \\
    \end{tabular}
    \caption{Target damage for CIFAR-10 large models attacked with $\cmatch$ clusters and label flipping generated with embeddings transferred from a ResNet-50 model.}
    \label{tab:transfer}
    \vspace{-5pt}
\end{table}

\subsection{Attack Point Optimization}
So far, our evaluation has focused on the label flipping attack, which generalizes across many data modalities. However, for image datasets, where we have the ability to continuously modify features, we will show that there is a benefit to optimizing the feature values to improve the attack. Because gradient-based optimization is typically designed for continuous-value features~\cite{koh2017understanding, geiping2020witches}, we run only on UTKFace.
%For our investigation, we will begin by directly comparing with the work of Koh and Liang~\cite{koh2017understanding}, and then providing results on the datasets we use for our evaluation.

We summarize our attack optimization results in Table~\ref{tab:utk_opt}. We find that Gradient Optimization causes significant performance improvements on small models, nearly doubling the performance of attacks on UTKFace + \sutk. For example, with $\alpha=1$, the label flipping attack achieves an average of 8.4\% target damage, while Gradient Optimization increases this to 15.5\% target damage. With influence, this undergoes another doubling to 38.6\%. However, for large models, the Gradient Optimization does not perform as well, not improving the average performance on UTKFace + \lutk, and often even decreasing it. This corroborates recent results indicating that influence functions break down on deep models~\cite{basu2020influence}. Because our attack is an approximation to influence functions, it is natural that our attack is also brittle in this regime.

The running time of gradient optimization significantly outperforms influence. For \sutk, gradient optimization approach returns poisoned data (from 50 iterations of optimization) in roughly 40 seconds, while influence takes on average 5.4 hours (to complete 250 iterations, one eighth the number used in \cite{koh2017understanding}). Although unsuccessful, gradient optimization with \lutk\ takes roughly 55 seconds, while it timed out for influence (when provided with a 5 minute timer for the first iteration, which would result in >1 day per attack). There is a clear tradeoff between running time and performance when deciding between influence and gradient optimization, although both break down at larger model sizes.

\begin{table}[h]
    \renewcommand\arraystretch{1.15}
    \centering
    \begin{tabular}{cc|ccc}
        \hline
        \multirow{2}{*}{Data + Model} & \multirow{2}{*}{Att} & \multicolumn{3}{c}{Target Damage} \\
         &  & $\alpha=0.5$ & $\alpha=1$ & $\alpha=2$ \\
        \hline
        \multirow{3}{*}{UTKFace + \sutk} & LF & 0.032 & 0.084 & 0.122 \\
        ~ & GO & 0.049 & 0.155 & 0.206 \\
        ~ & INF & 0.238 & 0.386 & 0.352 \\
        \hline
        \multirow{2}{*}{UTKFace + \lutk} & LF & 0.134 & 0.235 & 0.318 \\
        ~ & GO & 0.131 & 0.182 & 0.185 \\

    \end{tabular}
    \caption{Optimization Approaches for Poisoning Attack Generation: LF = Label Flipping, GO = Gradient Optimization, INF = Influence (250 iterations). We select a subset of 10 subpopulations to demonstrate the performance of the attack, measuring mean performance across these subpopulations. The attack performance nearly doubles for the small model with GO, but decreases for large models. INF improves the performance significantly, but is very slow.}
    \label{tab:utk_opt}
    \vspace{-5pt}
\end{table}

%\begin{table}[h]
%    \renewcommand\arraystretch{1.15}
%    \centering
%    \small
%    \begin{tabular}{c c | c c}
%        \hline
%        Model & Attack & Top-1 & Top-5 \\
%        \hline
%        UTKFace + \lutk & Label Flip & 60\% & 57\% \\
%        UTKFace + \lutk & Influence & 66\% & 62\% \\
%    \end{tabular}
%    \caption{}
%    \label{tab:utk_inf}
%\end{table}

\iffalse
subpop: [None, 400, 100, None, None, 100, 200, None, 400, 300]
rand: [100, 500, 300, 600, 400, None]

rand6 2000 0.0
rand3 100 1.0
rand4 500 1.0
subpop9 2000 0.0
rand5 300 1.0
rand1 600 1.0
subpop10 400 1.0
subpop2 100 1.0
subpop5 2000 0.0
subpop8 2000 0.0
subpop1 100 1.0
subpop7 200 1.0
subpop4 2000 0.0
rand2 400 1.0
subpop6 400 1.0
subpop3 300 1.0

subpop: [0, 1, 0, 0, 0, 0, 1, 0, 1, 0]
rand: [.2, 0, 0, 0, 0, 0, 1]

rand5 200 0.2
subpop5 200 0.0
subpop4 200 0.0
subpop3 200 0.0
subpop7 200 1.0
subpop10 200 0.0
rand2 200 0.0
subpop1 200 1.0
rand1 200 0.0
subpop9 200 0.0
subpop8 200 0.0
rand4 200 0.0
subpop2 200 1.0
rand6 200 0.0
rand7 200 0.0
rand3 200 1.0
subpop6 200 0.0

\fi
\section{Improving Targeted Attacks Using Subpopulations}
\label{sec:targeted}
Here, we demonstrate that identifying subpopulations can be complementary to targeted attacks---if an adversary's goal is to target a fixed set of $k$ inputs, inputs from a subpopulation will be easier to target than arbitrarily selecting them. To illustrate this, we use two state-of-the-art targeted attacks, influence functions~\cite{koh2017understanding} and Witches' Brew~\cite{geiping2020witches}. Influence functions are powerful and assume a large amount of adversarial knowledge, while Witches' Brew is less powerful but operates in a realistic threat model.
In this section, we keep the targeted attack algorithms unchanged, and only alter the points they target. We show that, when the $k$ points are selected from a subpopulation generated with \cmatch, they are more easily targeted than when $k$ points are selected at random. 

\subsection{Influence Functions~\cite{koh2017understanding}}
Influence functions were proposed for a targeted poisoning attack with a single target point~\cite{koh2017understanding}, but Figure 5 from \cite{koh2017understanding} also shows that influence functions can attack multiple test points at once.
To run this attack, the authors manually collect a set of 30 images of one author's dog, and show that a misclassified image of a dog in training can be used to attack this set of dogs.
To compare directly with this experiment, we will show that using \cmatch\ to generate this set of target examples makes the attack easier than arbitrarily selecting a set of examples by hand.
To show this, we start with the images of dogs from the test set of the Kaggle Cats and Dogs dataset\footnote{https://www.kaggle.com/chetankv/dogs-cats-images}.
We then select a set of targets of the same size (30 points), either by randomly sampling (to simulate the arbitrary selection procedure used by \cite{koh2017understanding}) or selecting 30 points from a subpopulation generated with \cmatch.
We generate 10 sets of targets for both strategies, and attack all target sets with a single data point (we use only 10 sets of targets due to the high computational cost of the influence optimization procedure).
We run influence optimization for 2000 iterations, to maintain consistency with~\cite{koh2017understanding}.
For each of these target sets, we measure both the effectiveness of the attack and the number of optimization iterations it takes to completely misclassify the target points (if the attack is successful).

The results of this experiment are presented in Table~\ref{tab:inf}.
While multiple target sets are resistant to attack with both random selection and \cmatch, \cmatch\ targets are easier to attack in the worst case (top-5 attack success is an average of 0.6, compared to 0.24 for random selection), and take fewer iterations to completely compromise when they are vulnerable (250 iterations, compared to 380 for random selection, and 2000 in the original paper). This iteration reduction corresponds to reducing the running time of a successful attack from roughly 4 hours in the original paper to 30 minutes for \cmatch\ (and roughly 45 minutes for random selection). Interestingly, the test points we drew from for this experiment were easier to attack than those used in \cite{koh2017understanding}, likely due to a disparity in the distribution of dogs from the ImageNet dataset and Kaggle Cats and Dogs.

\begin{table}[h]
    \renewcommand\arraystretch{1.15}
    \centering
    % \small
    \begin{tabular}{c| c c c}
        \hline
        Selection & Worst-1 & Worst-5 & Iterations (successful) \\
        \hline
        Koda \cite{koh2017understanding} & 0.53 & N/A & 2000 \\ 
        Random & 1.0 & 0.24 & 380 \\
        \cmatch & 1.0 & 0.6 & 250 \\
    \end{tabular}
    \caption{Effectiveness of \cmatch\ to aid influence-based poisoning attack generation. Compared to randomly generated baseline subpopulations, subpopulations generated with \cmatch\ are easier to poison (by worst-5 attack effectiveness), and take fewer iterations to attack (the number of iterations required to launch a successful attacks, when the attack is eventually successful).}
    \label{tab:inf}
    \vspace{-5pt}
\end{table}

\subsection{Witches' Brew~\cite{geiping2020witches}}

We additionally compare with the Witches' Brew attack of Geiping et al.~\cite{geiping2020witches}. This attack optimizes poisoning points using a gradient matching objective, and uses tools such as random restarts, differentiable data augmentation, and ensembling, which allow for very efficient targeted data poisoning attacks, which notably do not require fixing the model initialization, a requirement of other prior work~\cite{shafahi2018poison}. To demonstrate compatibility with subpopulations, we run a similar experiment as for influence functions---on CIFAR-10, we generate \cmatch\ subpopulations per-class using a clean model, and run targeted attacks where 30 points are chosen within a single subpopulation. We compare to a baseline of 30 randomly selected points from a single class. Subpopulations are generated per-class as Witches' Brew targets must have a uniform class. We run 24 attacks for each target selection procedure, using an ensemble size of 1, a budget of 1\% (500 poison points), differentiable data augmentation, 8 restarts, and $\varepsilon=16$, which is comparable to experiments in their Table 1. We report the fraction of targets misclassified by Witches' Brew in Table~\ref{tab:wb}, averaging performance over 8 models per trial. \cmatch -generated subpopulations outperform randomly selected points by a factor of 2.35x, when comparing the 10 most effective attacks. Over all 24 trials, the average \cmatch -based attack performs 86\% better than the average random selection attack.

\begin{table}[h]
    \renewcommand\arraystretch{1.15}
    \centering
    % \small
    \begin{tabular}{c| c c c c}
        \hline
        Selection & Worst-1 & Worst-5 & Worst-10 & Overall \\
        \hline
        Random & 0.300 & 0.153 & 0.114 & 0.072 \\
        \cmatch & 0.951 & 0.382 & 0.264 & 0.134 \\
    \end{tabular}
    \caption{Effectiveness of Witches' Brew targeted poisoning, using arbitrary and \cmatch -based target selection.}
    \label{tab:wb}
    \vspace{-5pt}
\end{table}
\section{Impossibility of Defenses}
\label{sec:defenses}
We present here an impossibility result for defending against subpopulation attacks, based on a model of learning theory relying on mixture models from \cite{feldman2019does}. Informally, a $k$-subpopulation mixture distribution is a mixture of $k$ subpopulations of disjoint support, with mixing coefficients $\alpha_i, i \in \{1,\dots,k\}$. A subpopulation mixture learner is locally dependent if the learner makes local decisions based only on subpopulations. We consider binary classifiers that use the 0-1 loss.  We use a simplified version of the \cite{feldman2019does} model to prove the following result:

%This model was proposed by \cite{feldman2019does} to explain the role of memorization in learning. 

%\begin{theorem}
%For any dataset $D$ of size $n$, consisting of $k$ subpopulations, there exists another dataset $D'$, such that no subpopulation-based learning algorithm can defend against subpopulation poisoning attacks of size $n/k$ on both $D$ and $D'$ with probability $>1/2$.
%\end{theorem}

\begin{theorem}
For any dataset $D$ of size $n$ drawn from a  $k$-subpopulation mixture distribution, there exists a subpopulation poisoning attack $D_p$ of size $\le n/k$ that causes all locally dependent $k$-subpopulation mixture learners $\mathcal{A}$ return $\mathcal{A}(D\cup D_p)=\mathcal{A}(D)$ with probability $<1/2$. Additionally, if $\alpha$ is the weight of the smallest subpopulation in the mixture distribution, then a subpopulation attack of size less than $ 2\alpha n$ suffices with probability at least $ 1 - \exp(- \alpha n/2)$.
\end{theorem}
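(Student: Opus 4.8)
The plan is to exploit the one property that defines a locally dependent learner --- that its decision on a subpopulation is a function of the training points lying \emph{inside} that subpopulation only --- together with a pigeonhole argument that locates a subpopulation cheap to overturn. Write the $k$ disjoint subpopulations as $S_1,\dots,S_k$, let $D_i=D\cap S_i$ and $n_i=|D_i|$, so $\sum_i n_i=n$. Because the supports are disjoint, poison points placed entirely inside one $S_j$ can only affect $\mathcal{A}$'s behavior on $S_j$; the decisions on every other subpopulation, and hence $\mathcal{A}$'s collateral behavior, are untouched, so any such $D_p$ is automatically a clean subpopulation attack in this model.

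The second ingredient is that a mixture learner restricted to $S_j$ behaves like a local plurality rule: if the labels seen in $S_j$ are unanimously $y_j$ it predicts $y_j$ on $S_j$, and once the opposite label strictly outnumbers the points seen in $S_j$ it must predict the opposite label (or, if it breaks an exact tie by an internal coin, it switches with probability at least $1/2$). For the first bound, pigeonhole gives some $S_j$ with $n_j\le n/k$; since subpopulations are pure, all of $D_j$ carries one label $y_j$, so $\mathcal{A}(D)$ predicts $y_j$ throughout $S_j$. Let $D_p$ consist of points drawn from $S_j$ all labeled $1-y_j$, in number just sufficient to make $1-y_j$ the local plurality --- at most $n_j\le n/k$ of them. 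On $S_j$ the poisoned learner then predicts $1-y_j$ with probability at least $1/2$, so $\mathcal{A}(D\cup D_p)=\mathcal{A}(D)$ with probability at most $1/2$ (strictly below once the plurality is made strict), and since nothing in the argument used $\mathcal{A}$ beyond local dependence and local calibration, it holds for every such $\mathcal{A}$ at once.

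For the high-probability refinement, observe that the smallest-count subpopulation --- which the construction above attacks --- has clean count at most that of the subpopulation $S_j$ of smallest mixing weight $\alpha$; so it suffices to bound $n_j=|D\cap S_j|$. This is a sum of $n$ i.i.d.\ $\{0,1\}$ indicators with mean $\alpha n$, so a Chernoff bound gives $\Pr[\,n_j\ge 2\alpha n\,]\le\exp(-\alpha n/2)$. On the complementary event the attack above uses fewer than $2\alpha n$ poison points, yielding the claimed success probability $1-\exp(-\alpha n/2)$.

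The step I expect to be the crux is the calibration claim in the second paragraph: pinning down precisely what ``locally dependent $k$-subpopulation mixture learner'' forces the classifier to do on $S_j$ once its local label balance is tipped, so that a bare adversarial majority provably (or with probability $\ge 1/2$) overturns it. Pigeonhole and the Chernoff bound are routine; the entire content of the theorem lives in that structural assumption, and getting the ``$<1/2$'' to be strict while matching the exact ``$\le n/k$'' and ``$<2\alpha n$'' budgets is where one must be careful about ties and about whether the learner is modeled as deterministic or randomized.
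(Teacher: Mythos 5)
Your proposal follows essentially the same route as the paper's proof: pigeonhole to locate a subpopulation with at most $n/k$ training points, flood it with oppositely-labeled points inside its support so that local dependence plus 0--1-loss minimization flips (or ties) the decision there while leaving all other subpopulations untouched, and a multiplicative Chernoff bound on the count of the smallest-weight subpopulation for the $2\alpha n$ statement with the same $\exp(-\alpha n/2)$ exponent. The only minor deviation is that you assume labels are unanimous within a subpopulation, whereas the paper's model allows noisy Bernoulli labels; replacing ``the unanimous label'' by ``the local majority label'' (the paper simply appends a fully label-flipped copy of the smallest subpopulation) leaves your argument---including the tie-breaking caveat you correctly flag, which the paper's own indistinguishability step glosses over in much the same way---intact.
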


Essentially, if the learning algorithm makes subpopulation-wide decisions, it will inherently be susceptible to subpopulation attacks. \cite{feldman2019does} shows that this structure holds for $k$-nearest neighbors, mixture models, overparameterized linear models, and suggests (based on some empirical evidence) that it holds for neural networks as well. This result becomes more interesting when $k$ is large, as this case represents more diverse data and smaller attacks. In Appendix~\ref{app:proof}, we state our formal definitions, theorem statement, and proof. 

However, there are ways that this negative result may not reflect practice. First, the specific learning algorithm and dataset may  avoid making subpopulation-wide decisions. Our attack results corroborate this---not all subpopulations are easy to attack, and even at a poisoning rate of 2, no subpopulations drop to 0\% accuracy. Second, our negative result only applies to purely algorithmic defenses which do not involve any human input. A potential circumvention for this would be to use a diverse, carefully annotated validation dataset to identify ongoing subpopulation attacks. However, this defense requires careful data collection and annotation, as well as knowledge of the subpopulations that may be attacked. Ultimately, there may be no substitute for careful data analysis.

\section{Empirical Analysis of Existing Defenses}
\label{sec:defemp}

\begin{table*}
    \centering
    \small
    \tabcolsep=0.15cm
    \begin{tabular}{c | c c c | c c c | c c c | c c c}
        \hline
        \multirow{2}{*}{$\alpha$} & \multicolumn{3}{c}{Subpopulation 1} & \multicolumn{3}{c}{Subpopulation 2} & \multicolumn{3}{c}{Subpopulation 3} & \multicolumn{3}{c}{Subpopulation 4} \\
        ~ & Orig & TRIM & SEVER & Orig & TRIM & SEVER & Orig & TRIM & SEVER & Orig & TRIM & SEVER \\
        \hline
        0.5 & 0.16 & 0.01 & -0.02 & 0.12 & 0.12 & 0.13 & 0.12 & 0.03 & 0.08 & 0.12 & -0.07 & -0.01 \\
        1.0 & 0.26 & 0.16 & 0.21 & 0.18 & 0.18 & 0.10 & 0.18 & 0.15 & 0.02 & 0.14 & 0.06 & 0.04 \\
        2.0 & 0.48 & -0.02 & 0.17 & 0.31 & 0.06 & 0.17 & 0.29 & 0.19 & 0.18 & 0.24 & 0.26 & 0.18 
    \end{tabular}
    \caption{Comparing standard training with TRIM and SEVER for the 5 most heavily damaged \cmatch\ subpopulations on UTKFace + \sutk\ for each poisoning rate $\alpha$. Orig indicates target damage of undefended training, TRIM is target damage under TRIM, and SEVER is target damage under SEVER.}
    \label{tab:trim}
    \vspace{-5pt}
\end{table*}

\begin{table*}[]
    \renewcommand\arraystretch{1.15}
    \centering
    \small
    \begin{tabular}{c c | c c c c c c}
        \hline
        \multirow{2}{*}{\shortstack{Data \\ Model}} & \multirow{2}{*}{Defense} & \multirow{2}{*}{$\alpha$} & \multirow{2}{*}{Found} & \multirow{2}{*}{\% Removed} & \multirow{2}{*}{\shortstack{Target \\ Before}} & \multirow{2}{*}{\shortstack{Target \\ After}} & \multirow{2}{*}{\shortstack{Collateral \\ Damage}} \\
        ~ & ~ & ~ & ~ & ~ & ~ & ~ & ~ \\
        \hline
        \multirow{2}{*}{\shortstack{UTKFace \\ \sutk}} & AC & 1.0 & 100\% & 25.7\% &   0.222 & 0.188 & 3.8\% \\
         & SS & 1.0 & 45.0\% & 15\% & 0.222 & 0.290 & 1.5\% \\
        \multirow{2}{*}{\shortstack{IMDB \\ \lbert}} & AC & 2.0 & 44.7\% & 31.1\% & 0.506 & 0.477 & 1.3\%\\
        & SS & 2.0 & 13.3\% & 15\% & 0.506 & 0.517 & 0.6\% \\

    \end{tabular}
    \caption{Effects of applying different backdoor poisoning defenses to models attacked with \cmatch\ on the subpopulation with highest initial target damage for each model/dataset. AC refers to activation clustering and SS refers to spectral signatures. \% Found indicates what percentage of the poisons is correctly identified by the mitigation, while \% Removed shows the percentage of the training set actually removed. The table also lists the target damage before and after the defense is used, and the collateral damage incurred with said defense. }
    \label{tab:bdrdef}
    \vspace{-5pt}
\end{table*}

A reader skeptical of our theoretical results may wonder to what extent existing defenses for poisoning attacks can be used to protect from subpopulation attacks. We consider both defenses for availability attacks and defenses for backdoor attacks.
Defenses for availability attacks ensure that poisoning does not compromise a model's accuracy significantly.
Meanwhile, subpopulation attacks do have a modest impact on accuracy, focused on the target subpopulation.
This makes using availability attack defenses for protecting against subpopulation attacks a promising approach.
On top of this, the model classes our lower bound applies to (k-NN, overparameterized linear models) are not the same as those for which these availability attack defenses provably defend - TRIM~\cite{jagielski2018manipulating} is designed for linear regression, 
ILTM~\cite{shen2019learning} is
%ILTM~\cite{shen2019learning} and SEVER~\cite{diakonikolas2018sever} are 
designed for generalized linear models (and is empirically successful for neural networks), and SEVER has provable guarantees for linear models and nonconvex models such as neural networks. Backdoor defenses, such as activation clustering~\cite{chen_detecting_2018} and spectral signatures~\cite{tran_spectral_2018} are designed to identify small anomalous subsets of training data for large neural network models, and so may be successful at identifying our subpopulation attacks.
In this section, we show that common existing defenses can fail against subpopulation attacks.

\begin{algorithm}[t]
\begin{algorithmic}
\State Input: Training data $D$ of $n$ examples, loss function $\ell$, attack count $m$, training algorithm $A$, maximum iteration count $T$
\State $\textsc{IND} =[n]$; $\textsc{IND\_PREV}=[]$; $\textsc{Iterations} = 0$
\While{$(\textsc{IND} \neq \textsc{IND\_PREV} \wedge \textsc{Iterations} < T)$ }
\State $\textsc{IND\_PREV} = \textsc{IND}$
\State $\textsc{Iterations} = \textsc{Iterations} + 1$
\State $f = A(D[\textsc{IND}])$ \Comment{Train model on good indices}
\State $\textsc{Losses} = [\ell(x_i, y_i) | (x_i, y_i)\in D]$ \Comment{Compute all losses}
\State $\textsc{IND} = \textsc{ArgSort}(\textsc{Losses})[:n-m]$ \Comment{Lowest $n-m$ indices by loss}

\EndWhile
\State \Return $f$
\end{algorithmic}
\caption{TRIM defense against availability attacks. Iteratively identifies poisoning by high loss values, and trains without those points.}
\label{alg:trim}
\end{algorithm}

We consider all five of the defenses mentioned above. When extended to generic models, TRIM and ILTM are equivalent, so we use TRIM to describe both.
TRIM/ILTM (Algorithm~\ref{alg:trim}) and SEVER both use an outlier score based on loss (for TRIM/ILTM) or gradient statistics (SEVER), and iteratively identify and remove outliers. For both, we use a maximum iteration count of $T=5$, due to the significant training time of our models, and set the attack count to be exactly the number of poisoning points, to illustrate the best possible performance of the defense.

We begin our investigation by illustrating a potential failure mode for availability attack defenses in Figure~\ref{fig:impossible_synth}; this failure mode is related to the theoretical impossibility result, but uses logistic regression, a model not captured by our theoretical results.
The figure portrays three logistic regression models - an unpoisoned model, a poisoned model, and a poisoned model with TRIM applied - trained on a synthetic binary classification dataset consisting of three subpopulations.
Each subpopulation consists of 20 data points, and 30 points are used to poison the second subpopulation.
The difference between Figures \ref{fig:unpois} and \ref{fig:pois} indicates that the subpopulation attack is fairly successful, causing a bias towards the poisoned class for that subpopulation.
This bias causes TRIM to identify the real data as the attack, rather than the poisoning, exacerbating the poisoning attack even further in Figure \ref{fig:pois_trim}. Similar results appear for the SEVER defense~\cite{diakonikolas2018sever}. These results demonstrate one failure mode of existing availability defenses at protecting against subpopulation attacks.

\begin{figure*}[t]
    \centering
\begin{subfigure}{0.33\textwidth}
  \includegraphics[width=\linewidth]{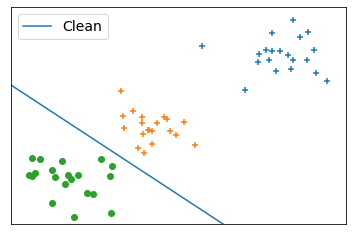}
  \caption{Unpoisoned model}
  \label{fig:unpois}
\end{subfigure}\hfil
\begin{subfigure}{0.33\textwidth}
  \includegraphics[width=\linewidth]{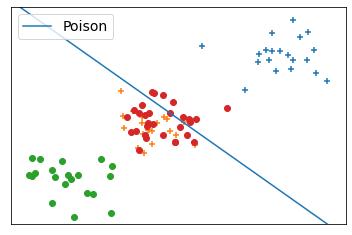}
  \caption{Poisoned model}
  \label{fig:pois}
\end{subfigure}\hfil
\begin{subfigure}{0.33\textwidth}
  \includegraphics[width=\linewidth]{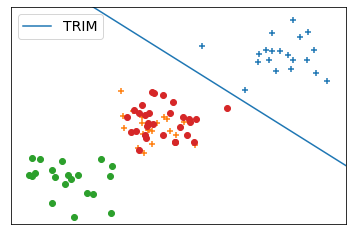}
  \caption{Poisoned + TRIM}
  \label{fig:pois_trim}
\end{subfigure}
\caption{Illustration of how standard defenses (such as TRIM~\cite{jagielski2018manipulating}/ILTM~\cite{shen2019learning}) can fail against subpopulation attacks. If the subpopulation attack can cause a bias towards the attack class, it will be exacerbated by the defense.}
\label{fig:impossible_synth}
\end{figure*}

We present the performance of availability attack defenses in Table~\ref{tab:trim}, and backdoor defenses in Table~\ref{tab:bdrdef}. We evaluate on UTKFace + \sutk\ and IMDB + \sbert. TRIM works fairly well for the two most impacted subpopulations on UTKFace, decreasing target damage from $48\%$ and $31\%$ target damage to $-2\%$ and $6\%$ target damage at $\alpha=2$. However, for many of these attacks, the target damage stays roughly the same or even increases. For example, the second most impacted subpopulation with $\alpha=.5$ has the target damage maintained at $12\%$, and the fourth most impacted subpopulation with $\alpha=2$ has target damage increase from $24\%$ to $26\%$. SEVER is similarly inconsistent. On IMDB, we run TRIM for the most damaged subpopulations at $\alpha=2$, finding no reduction in target damage on \sbert, and only a $\sim 8\%$ target damage reduction on \lbert. Backdoor defenses struggle, as well. Despite identifying all attack points on UTKFace, activation clustering removes a full 25.7\% of the training set as well, impacting performance enough that target damage is fairly unchanged. Spectral signatures has little effect on both datasets, as well.
This is evidence that more work is necessary to reliably defend against subpopulation attacks.

\section{Conclusion}
We propose subpopulation data poisoning attacks, a novel type of poisoning attack which does not require knowledge of the learner's training data or model architecture, and does not require the target testing data to be modified. We show two subpopulation selection techniques to instantiate subpopulation attacks, called \fmatch\ and \cmatch. \fmatch\ relies on manual annotations, while \cmatch\ automatically generates subpopulations based on the data, often allowing for more effective attacks. Given these subpopulation selection strategies, we also show how to generate the attack on that subpopulation---label flipping is a generic approach, but optimization-based strategies are able to improve the performance of the attack when suitable for the data modality. We provide experimental verification of the effectiveness and threat potential of subpopulation attacks using a diverse collection of datasets and models. Our datasets include tabular data, image data, and text data, and our attacks are successful on models trained both end-to-end and with transfer learning. Our subpopulation identification can also help improve existing targeted attacks. Finally, we consider defenses, proving an impossibility result which suggests some learners cannot defend against subpopulation attacks. We corroborate this by showing that existing poisoning defenses, TRIM/ILTM, SEVER, activation clustering, and spectral signatures, while sometimes successful, do not universally work at defending against our attacks, indicating future work on defenses is necessary. Our work helps understand the relationship between fair and robust machine learning, and may help understand to what extent models make decisions based on similar training data. Avenues of future work include investigating which subpopulations are at highest risk for different scenarios, and addressing the challenges of protecting ML against this novel threat.

\section*{Acknowledgments}
This research was  sponsored by the U.S. Army Combat Capabilities Development
Command Army Research Laboratory  under Cooperative Agreement Number W911NF-13-2-0045 (ARL Cyber Security CRA). The views and conclusions contained in this document are those of the authors and should not be interpreted as representing the official policies, either expressed or implied, of the Combat Capabilities Development Command Army Research Laboratory or the U.S. Government. The U.S. Government is authorized to reproduce and distribute reprints for Government purposes notwithstanding any copyright notation here on.

\bibliographystyle{alpha}
\bibliography{references}

\newcommand{\etalchar}[1]{$^{#1}$}
\begin{thebibliography}{KBAR{\etalchar{+}}18}

\bibitem[BCM{\etalchar{+}}13]{biggio2013evasion}
Battista Biggio, Igino Corona, Davide Maiorca, Blaine Nelson, Nedim
  {\v{S}}rndi{\'c}, Pavel Laskov, Giorgio Giacinto, and Fabio Roli.
\newblock Evasion attacks against machine learning at test time.
\newblock In {\em Joint European conference on machine learning and knowledge
  discovery in databases}, pages 387--402. Springer, 2013.

\bibitem[BG18]{buolamwini2018gender}
Joy Buolamwini and Timnit Gebru.
\newblock Gender shades: Intersectional accuracy disparities in commercial
  gender classification.
\newblock In {\em Conference on fairness, accountability and transparency},
  pages 77--91, 2018.

\bibitem[BNL12]{Biggio2012PoisoningAA}
Battista Biggio, Blaine Nelson, and Pavel Laskov.
\newblock Poisoning attacks against support vector machines.
\newblock In {\em Proceedings of the 29th International Coference on
  International Conference on Machine Learning, ICML}, 2012.

\bibitem[BPF21]{basu2020influence}
Samyadeep Basu, Phil Pope, and Soheil Feizi.
\newblock Influence functions in deep learning are fragile.
\newblock In {\em International Conference on Learning Representations}, 2021.

\bibitem[BRA{\etalchar{+}}14]{biggio2014poisoning}
Battista Biggio, Konrad Rieck, Davide Ariu, Christian Wressnegger, Igino
  Corona, Giorgio Giacinto, and Fabio Roli.
\newblock Poisoning behavioral malware clustering.
\newblock In {\em Proceedings of the 2014 workshop on artificial intelligent
  and security workshop}, pages 27--36, 2014.

\bibitem[BVH{\etalchar{+}}20]{bagdasaryan2018backdoor}
Eugene Bagdasaryan, Andreas Veit, Yiqing Hua, Deborah Estrin, and Vitaly
  Shmatikov.
\newblock How to backdoor federated learning.
\newblock In Silvia Chiappa and Roberto Calandra, editors, {\em Proceedings of
  the Twenty Third International Conference on Artificial Intelligence and
  Statistics}, volume 108 of {\em Proceedings of Machine Learning Research},
  pages 2938--2948. PMLR, 26--28 Aug 2020.

\bibitem[CCB{\etalchar{+}}18]{chen_detecting_2018}
Bryant Chen, Wilka Carvalho, Nathalie Baracaldo, Heiko Ludwig, Benjamin
  Edwards, Taesung Lee, Ian Molloy, and Biplav Srivastava.
\newblock Detecting {Backdoor} {Attacks} on {Deep} {Neural} {Networks} by
  {Activation} {Clustering}.
\newblock {\em arXiv:1811.03728 [cs, stat]}, November 2018.
\newblock arXiv: 1811.03728.

\bibitem[CLL{\etalchar{+}}17]{chen2017targeted}
Xinyun Chen, Chang Liu, Bo~Li, Kimberly Lu, and Dawn Song.
\newblock Targeted backdoor attacks on deep learning systems using data
  poisoning.
\newblock {\em arXiv preprint arXiv:1712.05526}, 2017.

\bibitem[CNM{\etalchar{+}}20]{chang2020adversarial}
Hongyan Chang, Ta~Duy Nguyen, Sasi~Kumar Murakonda, Ehsan Kazemi, and Reza
  Shokri.
\newblock On adversarial bias and the robustness of fair machine learning.
\newblock {\em arXiv preprint arXiv:2006.08669}, 2020.

\bibitem[CW17a]{Carlini17}
Nicholas Carlini and David Wagner.
\newblock Towards evaluating the robustness of neural networks.
\newblock In {\em Proc. IEEE Security and Privacy Symposium}, 2017.

\bibitem[CW17b]{carlini2017towards}
Nicholas Carlini and David Wagner.
\newblock Towards evaluating the robustness of neural networks.
\newblock In {\em 2017 ieee symposium on security and privacy (sp)}, pages
  39--57. IEEE, 2017.

\bibitem[CW18]{carlini2018audio}
Nicholas Carlini and David Wagner.
\newblock Audio adversarial examples: Targeted attacks on speech-to-text.
\newblock In {\em 2018 IEEE Security and Privacy Workshops (SPW)}, pages 1--7.
  IEEE, 2018.

\bibitem[DCLT19]{devlin2018bert}
Jacob Devlin, Ming-Wei Chang, Kenton Lee, and Kristina Toutanova.
\newblock {BERT}: Pre-training of deep bidirectional transformers for language
  understanding.
\newblock In {\em Proceedings of the 2019 Conference of the North {A}merican
  Chapter of the Association for Computational Linguistics: Human Language
  Technologies, Volume 1 (Long and Short Papers)}, pages 4171--4186,
  Minneapolis, Minnesota, June 2019. Association for Computational Linguistics.

\bibitem[DDS{\etalchar{+}}09]{imagenet_cvpr09}
J.~Deng, W.~Dong, R.~Socher, L.-J. Li, K.~Li, and L.~Fei-Fei.
\newblock {ImageNet: A Large-Scale Hierarchical Image Database}.
\newblock In {\em CVPR09}, 2009.

\bibitem[DG17]{Dua:2019}
Dheeru Dua and Casey Graff.
\newblock Uci machine learning repository, 2017.

\bibitem[DHP{\etalchar{+}}12]{dwork2012fairness}
Cynthia Dwork, Moritz Hardt, Toniann Pitassi, Omer Reingold, and Richard Zemel.
\newblock Fairness through awareness.
\newblock In {\em Proceedings of the 3rd innovations in theoretical computer
  science conference}, pages 214--226, 2012.

\bibitem[DKK{\etalchar{+}}19]{diakonikolas2018sever}
Ilias Diakonikolas, Gautam Kamath, Daniel Kane, Jerry Li, Jacob Steinhardt, and
  Alistair Stewart.
\newblock Sever: A robust meta-algorithm for stochastic optimization.
\newblock In {\em International Conference on Machine Learning}, pages
  1596--1606. PMLR, 2019.

\bibitem[DMP{\etalchar{+}}19]{demontis19transfer}
Ambra Demontis, Marco Melis, Maura Pintor, Matthew Jagielski, Battista Biggio,
  Alina Oprea, Cristina Nita-Rotaru, and Fabio Roli.
\newblock Why do adversarial attacks transfer? explaining transferability of
  evasion and poisoning attacks.
\newblock In {\em 28th {USENIX} Security Symposium ({USENIX} Security 19)},
  pages 321--338, Santa Clara, CA, August 2019. {USENIX} Association.

\bibitem[Fel20]{feldman2019does}
Vitaly Feldman.
\newblock Does learning require memorization? a short tale about a long tail.
\newblock {\em Proceedings of the 52nd Annual ACM SIGACT Symposium on Theory of
  Computing (STOC)}, 2020.

\bibitem[GFH{\etalchar{+}}21]{geiping2020witches}
Jonas Geiping, Liam~H Fowl, W.~Ronny Huang, Wojciech Czaja, Gavin Taylor,
  Michael Moeller, and Tom Goldstein.
\newblock Witches' brew: Industrial scale data poisoning via gradient matching.
\newblock In {\em International Conference on Learning Representations}, 2021.

\bibitem[GLDGG19]{gu2017badnets}
Tianyu Gu, Kang Liu, Brendan Dolan-Gavitt, and Siddharth Garg.
\newblock Badnets: Evaluating backdooring attacks on deep neural networks.
\newblock {\em IEEE Access}, 7:47230--47244, 2019.

\bibitem[GSS15]{Goodfellow14}
Ian Goodfellow, Jonathon Shlens, and Christian Szegedy.
\newblock Explaining and harnessing adversarial examples.
\newblock In {\em International Conference on Learning Representations}, 2015.

\bibitem[HPS{\etalchar{+}}16]{hardt2016equality}
Moritz Hardt, Eric Price, Nati Srebro, et~al.
\newblock Equality of opportunity in supervised learning.
\newblock In {\em Advances in neural information processing systems}, pages
  3315--3323, 2016.

\bibitem[HRM{\etalchar{+}}18]{hard2018federated}
Andrew Hard, Kanishka Rao, Rajiv Mathews, Fran{\c{c}}oise Beaufays, Sean
  Augenstein, Hubert Eichner, Chlo{\'e} Kiddon, and Daniel Ramage.
\newblock Federated learning for mobile keyboard prediction.
\newblock {\em arXiv preprint arXiv:1811.03604}, 2018.

\bibitem[HS97]{hochreiter1997long}
Sepp Hochreiter and J{\"u}rgen Schmidhuber.
\newblock Long short-term memory.
\newblock {\em Neural computation}, 9(8):1735--1780, 1997.

\bibitem[JOB{\etalchar{+}}18]{jagielski2018manipulating}
Matthew Jagielski, Alina Oprea, Battista Biggio, Chang Liu, Cristina
  Nita-Rotaru, and Bo~Li.
\newblock Manipulating machine learning: Poisoning attacks and countermeasures
  for regression learning.
\newblock In {\em 2018 IEEE Symposium on Security and Privacy (SP)}, pages
  19--35. IEEE, 2018.

\bibitem[KATL19]{koh2019accuracy}
Pang Wei~W Koh, Kai-Siang Ang, Hubert Teo, and Percy~S Liang.
\newblock On the accuracy of influence functions for measuring group effects.
\newblock In {\em Advances in Neural Information Processing Systems}, pages
  5255--5265, 2019.

\bibitem[KB15]{kingma2014adam}
Diederik~P Kingma and Jimmy~Lei Ba.
\newblock Adam: A method for stochastic gradient descent.
\newblock In {\em ICLR: International Conference on Learning Representations},
  pages 1--15, 2015.

\bibitem[KBA{\etalchar{+}}19]{kumar2019failure}
Ram Shankar~Siva Kumar, David~O Brien, Kendra Albert, Salomé Viljöen, and
  Jeffrey Snover.
\newblock Failure modes in machine learning systems, 2019.

\bibitem[KBAR{\etalchar{+}}18]{kreuk2018deceiving}
Felix Kreuk, Assi Barak, Shir Aviv-Reuven, Moran Baruch, Benny Pinkas, and
  Joseph Keshet.
\newblock Deceiving end-to-end deep learning malware detectors using
  adversarial examples.
\newblock {\em arXiv preprint arXiv:1802.04528}, 2018.

\bibitem[KL17]{koh2017understanding}
Pang~Wei Koh and Percy Liang.
\newblock Understanding black-box predictions via influence functions.
\newblock In {\em Proceedings of the 34th International Conference on Machine
  Learning-Volume 70}, pages 1885--1894. JMLR. org, 2017.

\bibitem[KNL{\etalchar{+}}20]{kumar2020adversarial}
Ram Shankar~Siva Kumar, Magnus Nystr{\"o}m, John Lambert, Andrew Marshall,
  Mario Goertzel, Andi Comissoneru, Matt Swann, and Sharon Xia.
\newblock Adversarial machine learning-industry perspectives.
\newblock In {\em 2020 IEEE Security and Privacy Workshops (SPW)}, pages
  69--75. IEEE, 2020.

\bibitem[KOTG20]{kulynych2020pots}
Bogdan Kulynych, Rebekah Overdorf, Carmela Troncoso, and Seda G{\"u}rses.
\newblock Pots: protective optimization technologies.
\newblock In {\em Proceedings of the 2020 Conference on Fairness,
  Accountability, and Transparency}, pages 177--188, 2020.

\bibitem[Kri09]{Krizhevsky09learningmultiple}
Alex Krizhevsky.
\newblock Learning multiple layers of features from tiny images.
\newblock Technical report, 2009.

\bibitem[KSL18]{koh2018stronger}
Pang~Wei Koh, Jacob Steinhardt, and Percy Liang.
\newblock Stronger data poisoning attacks break data sanitization defenses.
\newblock {\em arXiv preprint arXiv:1811.00741}, 2018.

\bibitem[KSL19]{kornblith2019better}
Simon Kornblith, Jonathon Shlens, and Quoc~V Le.
\newblock Do better imagenet models transfer better?
\newblock In {\em Proceedings of the IEEE conference on computer vision and
  pattern recognition}, pages 2661--2671, 2019.

\bibitem[L{\etalchar{+}}]{lecun2015lenet}
Yann LeCun et~al.
\newblock Lenet-5, convolutional neural networks.

\bibitem[LDGG18]{liu_fine-pruning_2018}
Kang Liu, Brendan Dolan-Gavitt, and Siddharth Garg.
\newblock Fine-pruning: Defending against backdooring attacks on deep neural
  networks.
\newblock In {\em International Symposium on Research in Attacks, Intrusions,
  and Defenses}, pages 273--294. Springer, 2018.

\bibitem[LMBL20]{liu2020reflection}
Yunfei Liu, Xingjun Ma, James Bailey, and Feng Lu.
\newblock Reflection backdoor: A natural backdoor attack on deep neural
  networks.
\newblock In {\em European Conference on Computer Vision}, pages 182--199.
  Springer, 2020.

\bibitem[LSO20]{li2019gradient}
Mingchen Li, Mahdi Soltanolkotabi, and Samet Oymak.
\newblock Gradient descent with early stopping is provably robust to label
  noise for overparameterized neural networks.
\newblock In {\em International Conference on Artificial Intelligence and
  Statistics}, pages 4313--4324. PMLR, 2020.

\bibitem[LXLZ20]{lin2020composite}
Junyu Lin, Lei Xu, Yingqi Liu, and Xiangyu Zhang.
\newblock Composite backdoor attack for deep neural network by mixing existing
  benign features.
\newblock In {\em Proceedings of the 2020 ACM SIGSAC Conference on Computer and
  Communications Security}, pages 113--131, 2020.

\bibitem[LXS17]{liu2017neural}
Yuntao Liu, Yang Xie, and Ankur Srivastava.
\newblock Neural trojans.
\newblock In {\em 2017 IEEE International Conference on Computer Design
  (ICCD)}, pages 45--48. IEEE, 2017.

\bibitem[MDP{\etalchar{+}}11]{maas-EtAl:2011:ACL-HLT2011}
Andrew~L. Maas, Raymond~E. Daly, Peter~T. Pham, Dan Huang, Andrew~Y. Ng, and
  Christopher Potts.
\newblock Learning word vectors for sentiment analysis.
\newblock In {\em Proceedings of the 49th Annual Meeting of the Association for
  Computational Linguistics: Human Language Technologies}, pages 142--150,
  Portland, Oregon, USA, June 2011. Association for Computational Linguistics.

\bibitem[MTJ{\etalchar{+}}21]{mireshghallah2020principled}
Fatemehsadat Mireshghallah, Mohammadkazem Taram, Ali Jalali, Ahmed~Taha
  Elthakeb, Dean Tullsen, and Hadi Esmaeilzadeh.
\newblock Not all features are equal: Discovering essential features for
  preserving prediction privacy.
\newblock In {\em Proceedings of The Web Conference 2021}, April 2021.

\bibitem[MZ15a]{mei2015using}
Shike Mei and Xiaojin Zhu.
\newblock Using machine teaching to identify optimal training-set attacks on
  machine learners.
\newblock In {\em Twenty-Ninth AAAI Conference on Artificial Intelligence},
  2015.

\bibitem[MZ15b]{mei15teaching}
Shike Mei and Xiaojin Zhu.
\newblock Using machine teaching to identify optimal training-set attacks on
  machine learners.
\newblock In {\em Proceedings of the Twenty-Ninth AAAI Conference on Artificial
  Intelligence}, AAAI’15, page 2871–2877. AAAI Press, 2015.

\bibitem[NKS06]{newsome_paragraph_2006}
James Newsome, Brad Karp, and Dawn Song.
\newblock Paragraph: {Thwarting} {Signature} {Learning} by {Training}
  {Maliciously}.
\newblock In {\em Recent {Advances} in {Intrusion} {Detection}}, volume 4219.
  Springer Berlin Heidelberg, Berlin, Heidelberg, 2006.
\newblock Series Title: Lecture Notes in Computer Science.

\bibitem[PDW{\etalchar{+}}06]{perdisci_misleading_2006}
R.~Perdisci, D.~Dagon, {Wenke Lee}, P.~Fogla, and M.~Sharif.
\newblock Misleading worm signature generators using deliberate noise
  injection.
\newblock In {\em 2006 {IEEE} {Symposium} on {Security} and {Privacy}
  ({S}\&{P}'06)}, Berkeley/Oakland, CA, 2006. IEEE.

\bibitem[PGH{\etalchar{+}}20]{Peri2019DeepKD}
Neehar Peri, Neal Gupta, W~Ronny Huang, Liam Fowl, Chen Zhu, Soheil Feizi, Tom
  Goldstein, and John~P Dickerson.
\newblock Deep k-nn defense against clean-label data poisoning attacks.
\newblock In {\em European Conference on Computer Vision}, pages 55--70.
  Springer, 2020.

\bibitem[PMG{\etalchar{+}}17]{papernot2017practical}
Nicolas Papernot, Patrick McDaniel, Ian Goodfellow, Somesh Jha, Z~Berkay Celik,
  and Ananthram Swami.
\newblock Practical black-box attacks against machine learning.
\newblock In {\em Proceedings of the 2017 ACM on Asia conference on computer
  and communications security}, pages 506--519, 2017.

\bibitem[PVG{\etalchar{+}}11]{scikit-learn}
F.~Pedregosa, G.~Varoquaux, A.~Gramfort, V.~Michel, B.~Thirion, O.~Grisel,
  M.~Blondel, P.~Prettenhofer, R.~Weiss, V.~Dubourg, J.~Vanderplas, A.~Passos,
  D.~Cournapeau, M.~Brucher, M.~Perrot, and E.~Duchesnay.
\newblock Scikit-learn: Machine learning in {P}ython.
\newblock {\em Journal of Machine Learning Research}, 12:2825--2830, 2011.

\bibitem[PY09]{pan2009survey}
Sinno~Jialin Pan and Qiang Yang.
\newblock A survey on transfer learning.
\newblock {\em IEEE Transactions on knowledge and data engineering},
  22(10):1345--1359, 2009.

\bibitem[RNH{\etalchar{+}}09]{rubinstein2009antidote}
Benjamin~IP Rubinstein, Blaine Nelson, Ling Huang, Anthony~D Joseph, Shing-hon
  Lau, Satish Rao, Nina Taft, and J~Doug Tygar.
\newblock Antidote: understanding and defending against poisoning of anomaly
  detectors.
\newblock In {\em Proceedings of the 9th ACM SIGCOMM conference on Internet
  measurement}, pages 1--14, 2009.

\bibitem[RWC{\etalchar{+}}19]{radford2019language}
Alec Radford, Jeffrey Wu, Rewon Child, David Luan, Dario Amodei, and Ilya
  Sutskever.
\newblock Language models are unsupervised multitask learners.
\newblock {\em OpenAI Blog}, 1(8):9, 2019.

\bibitem[SBC20]{solans2020poisoning}
David Solans, Battista Biggio, and Carlos Castillo.
\newblock Poisoning attacks on algorithmic fairness.
\newblock {\em arXiv preprint arXiv:2004.07401}, 2020.

\bibitem[SHN{\etalchar{+}}18]{shafahi2018poison}
Ali Shafahi, W~Ronny Huang, Mahyar Najibi, Octavian Suciu, Christoph Studer,
  Tudor Dumitras, and Tom Goldstein.
\newblock Poison frogs! targeted clean-label poisoning attacks on neural
  networks.
\newblock In {\em Advances in Neural Information Processing Systems}, pages
  6103--6113, 2018.

\bibitem[SKZ{\etalchar{+}}18]{schonherr2018adversarial}
Lea Schonherr, Katharina Kohls, Steffen Zeiler, Thorsten Holz, and Dorothea
  Kolossa.
\newblock Adversarial attacks against automatic speech recognition systems via
  psychoacoustic hiding.
\newblock {\em arXiv preprint arXiv:1808.05665}, 2018.

\bibitem[SL14]{Srndic14}
N.~Srndic and P.~Laskov.
\newblock Practical evasion of a learning-based classifier: A case study.
\newblock In {\em Proc. IEEE Security and Privacy Symposium}, 2014.

\bibitem[SMK{\etalchar{+}}18]{suciu2018does}
Octavian Suciu, Radu Marginean, Yigitcan Kaya, Hal Daume~III, and Tudor
  Dumitras.
\newblock When does machine learning $\{$FAIL$\}$? generalized transferability
  for evasion and poisoning attacks.
\newblock In {\em 27th $\{$USENIX$\}$ Security Symposium ($\{$USENIX$\}$
  Security 18)}, pages 1299--1316, 2018.

\bibitem[SS19]{shen2019learning}
Yanyao Shen and Sujay Sanghavi.
\newblock Learning with bad training data via iterative trimmed loss
  minimization.
\newblock In {\em International Conference on Machine Learning}, pages
  5739--5748. PMLR, 2019.

\bibitem[SS20]{song2019overlearning}
Congzheng Song and Vitaly Shmatikov.
\newblock Overlearning reveals sensitive attributes.
\newblock In {\em International Conference on Learning Representations}, 2020.

\bibitem[SSMS20]{schuster2020humpty}
Roei Schuster, Tal Schuster, Yoav Meri, and Vitaly Shmatikov.
\newblock Humpty dumpty: Controlling word meanings via corpus poisoning.
\newblock In {\em 2020 IEEE Symposium on Security and Privacy (SP)}, pages
  1295--1313. IEEE, 2020.

\bibitem[SSSS17]{shokri2017membership}
Reza Shokri, Marco Stronati, Congzheng Song, and Vitaly Shmatikov.
\newblock Membership inference attacks against machine learning models.
\newblock In {\em 2017 IEEE Symposium on Security and Privacy (SP)}, pages
  3--18. IEEE, 2017.

\bibitem[SZ15]{simonyan2014very}
Karen Simonyan and Andrew Zisserman.
\newblock Very deep convolutional networks for large-scale image recognition.
\newblock In Yoshua Bengio and Yann LeCun, editors, {\em 3rd International
  Conference on Learning Representations, {ICLR} 2015, San Diego, CA, USA, May
  7-9, 2015, Conference Track Proceedings}, 2015.

\bibitem[SZS{\etalchar{+}}14]{Szegedy14}
Christian Szegedy, Wojciech Zaremba, Ilya Sutskever, Joan Bruna, Dumitru Erhan,
  Ian Goodfellow, and Rob Fergus.
\newblock Intriguing properties of neural networks.
\newblock In {\em International Conference on Learning Representations}, 2014.

\bibitem[TLM18]{tran_spectral_2018}
Brandon Tran, Jerry Li, and Aleksander Madry.
\newblock Spectral signatures in backdoor attacks.
\newblock In {\em Proceedings of the 32nd {International} {Conference} on
  {Neural} {Information} {Processing} {Systems}}, {NIPS}'18, pages 8011--8021,
  Montréal, Canada, December 2018. Curran Associates Inc.

\bibitem[TTM19]{Turner2018CleanLabelBA}
Alexander Turner, Dimitris Tsipras, and Aleksander Madry.
\newblock Clean-label backdoor attacks.
\newblock In {\em ICLR}, 2019.

\bibitem[VLT{\etalchar{+}}20]{veldanda_nnoculation_2020}
Akshaj~Kumar Veldanda, Kang Liu, Benjamin Tan, Prashanth Krishnamurthy, Farshad
  Khorrami, Ramesh Karri, Brendan Dolan-Gavitt, and Siddharth Garg.
\newblock {NNoculation}: {Broad} {Spectrum} and {Targeted} {Treatment} of
  {Backdoored} {DNNs}.
\newblock {\em arXiv:2002.08313 [cs]}, February 2020.
\newblock arXiv: 2002.08313.

\bibitem[VSP{\etalchar{+}}17]{vaswani2017attention}
Ashish Vaswani, Noam Shazeer, Niki Parmar, Jakob Uszkoreit, Llion Jones,
  Aidan~N Gomez, {\L}ukasz Kaiser, and Illia Polosukhin.
\newblock Attention is all you need.
\newblock In {\em Advances in neural information processing systems}, pages
  5998--6008, 2017.

\bibitem[WDS{\etalchar{+}}20]{Wolf2019HuggingFacesTS}
Thomas Wolf, Lysandre Debut, Victor Sanh, Julien Chaumond, Clement Delangue,
  Anthony Moi, Pierric Cistac, Tim Rault, Remi Louf, Morgan Funtowicz, Joe
  Davison, Sam Shleifer, Patrick von Platen, Clara Ma, Yacine Jernite, Julien
  Plu, Canwen Xu, Teven Le~Scao, Sylvain Gugger, Mariama Drame, Quentin Lhoest,
  and Alexander Rush.
\newblock Transformers: State-of-the-art natural language processing.
\newblock In {\em Proceedings of the 2020 Conference on Empirical Methods in
  Natural Language Processing: System Demonstrations}, pages 38--45, Online,
  October 2020. Association for Computational Linguistics.

\bibitem[WYS{\etalchar{+}}19]{wang_neural_2019}
Bolun Wang, Yuanshun Yao, Shawn Shan, Huiying Li, Bimal Viswanath, Haitao
  Zheng, and Ben~Y. Zhao.
\newblock Neural {Cleanse}: {Identifying} and {Mitigating} {Backdoor} {Attacks}
  in {Neural} {Networks}.
\newblock In {\em 2019 {IEEE} {Symposium} on {Security} and {Privacy} ({SP})},
  pages 707--723, San Francisco, CA, USA, May 2019. IEEE.

\bibitem[XBB{\etalchar{+}}15]{xiao2015feature}
Huang Xiao, Battista Biggio, Gavin Brown, Giorgio Fumera, Claudia Eckert, and
  Fabio Roli.
\newblock Is feature selection secure against training data poisoning?
\newblock In {\em International Conference on Machine Learning}, pages
  1689--1698, 2015.

\bibitem[XQE16]{xu2016automatically}
Weilin Xu, Yanjun Qi, and David Evans.
\newblock Automatically evading classifiers.
\newblock In {\em Proceedings of the 2016 Network and Distributed Systems
  Symposium}, pages 21--24, 2016.

\bibitem[YDY{\etalchar{+}}19]{yang2019xlnet}
Zhilin Yang, Zihang Dai, Yiming Yang, Jaime Carbonell, Russ~R Salakhutdinov,
  and Quoc~V Le.
\newblock Xlnet: Generalized autoregressive pretraining for language
  understanding.
\newblock In {\em Advances in neural information processing systems}, pages
  5754--5764, 2019.

\bibitem[YGFJ18]{yeom2018privacy}
Samuel Yeom, Irene Giacomelli, Matt Fredrikson, and Somesh Jha.
\newblock Privacy risk in machine learning: Analyzing the connection to
  overfitting.
\newblock In {\em 2018 IEEE 31st Computer Security Foundations Symposium
  (CSF)}, pages 268--282. IEEE, 2018.

\bibitem[ZSQ17]{zhifei2017cvpr}
Zhifei Zhang, Yang Song, and Hairong Qi.
\newblock Age progression/regression by conditional adversarial autoencoder.
\newblock In {\em IEEE Conference on Computer Vision and Pattern Recognition
  (CVPR)}. IEEE, 2017.

\end{thebibliography}

\appendix
\section{Appendix for Defenses}
\subsection{Proof for Section~\ref{sec:defenses}}
\label{app:proof}

In this section, we will describe a theoretical model in which subpopulation attacks are \emph{impossible} to defend against.
The model is closely related to two existing theoretical models.
First is the corrupted clusterable dataset model of \cite{li2019gradient}, which was used to analyze the robustness to label noise of neural networks trained with early stopping.
The second relevant model is the subpopulation mixture model of \cite{feldman2019does} (appearing at STOC 2020), which they used to justify the importance of memorization in learning and explain the gap between differentially private learning and nonprivate learning. Our model generalizes the \cite{li2019gradient} model, and simplifies the \cite{feldman2019does} model.

The two key components of our model are the datasets, which consist of potentially noisy subpopulations of data, and the classifiers, which assign a uniform class to each cluster. \cite{li2019gradient} show that the neural network architecture and training procedure they use produces this set of classifiers if label noise is not too large. \cite{feldman2019does} shows that overparameterized linear models, k-nearest neighbors, and mixture models are examples of these classifiers, conjecturing (based on empirical evidence) that neural networks are as well.

\begin{definition}[Noisy $k$-Subpopulation Mixture Distribution]
A noisy $k$-subpopulation mixture distribution $\mathcal{D}$ over $\mathcal{X}\times \mathcal{Y}$ consists of $k$ subpopulations $\lbrace\mathcal{D}_i\rbrace_{i=1}^k$, with distinct, known, supports over $\mathcal{X}$, (unknown) mixture weights, and labels drawn from subpopulation-specific Bernoulli distributions.

We write the supports of each subpopulation $\lbrace \mathcal{X}_i\subset \mathcal{X}\rbrace_{i=1}^k$. By distinct supports, we mean that $\forall i,j\in [k]$ with $i\neq j$, $Supp_{\mathcal{X}}(\mathcal{D}_i)\cap Supp_{\mathcal{X}}(\mathcal{D}_j) = \emptyset$. Furthermore, because the supports are known, there exists a function $C_{\mathcal{D}}:\mathcal{X}\rightarrow[k]$ returning the subpopulation of the given sample.

We write the unknown mixture weights as $\boldsymbol{\alpha} = \lbrace \alpha_i\rbrace_{i=1}^k$; note that $\sum_i \alpha_i = 1$. The subpopulation-specific label distributions are written $\lbrace \text{Bernoulli}(p_i)\rbrace_{i=1}^k$. The full distribution can be written as $\mathcal{D}=\sum_i \alpha_i\mathcal{D}_i$.
\end{definition}

Notice that the existence of $C_{\mathcal{D}}$ implies that the classification task on $\mathcal{D}$ is exactly the problem of estimating the correct label for each subpopulation. We formalize this by introducing $k$-subpopulation mixture learners.

\begin{definition}[$k$-Subpopulation Mixture Learner]
A subpopulation mixture learner $\mathcal{A}$ takes as input a dataset $D$ of size $n$, and the subpopulation function $C_{\mathcal{D}}$ of a noisy $k$-subpopulation mixture distribution $\mathcal{D}$, and returns a classifier $f: [k]\rightarrow \lbrace 0, 1\rbrace$. On a fresh sample $x$, the classifier returns $f(C_{\mathcal{D}}(x))$. We call the learner \emph{locally dependent} if $f(i)$ depends only on $\lbrace y | (x, y) \in D\cap Supp_{\mathcal{X}}(\mathcal{D}_i)\rbrace $, that is, only labels from those data points which belong to the specific subpopulation. 
\end{definition}

The learners considered in both \cite{li2019gradient} (particular shallow wide neural networks) and \cite{feldman2019does} ($k$-nearest neighbors, mixture models, overparameterized linear models) are locally dependent. 

For our main theorem, we consider learners that are binary classifiers, and select a label that minimizes the cumulative error on each subpopulation using the 0-1 loss.

\begin{theorem}
For any dataset $D$ of size $n$ drawn from a  noisy $k$-subpopulation mixture distribution and subpopulation function $C_{\mathcal{D}}$, there exists a subpopulation poisoning attack $D_p$ of size $\le n/k$ that causes all locally dependent $k$-subpopulation mixture learners $\mathcal{A}$ that minimize 0-1 loss in binary classification return:

$$\mathcal{A}(D\cup D_p)=\mathcal{A}(D) \mbox{ with probability } < \frac{1}{2}. $$

Additionally, if $\alpha = \min_i \alpha_i$ is the weight of the smallest subpopulation in the mixture distribution, then a subpopulation attack of size less than $ 2\alpha n$ suffices with probability at least $ 1 - \exp(- \alpha n/2)$.
\end{theorem}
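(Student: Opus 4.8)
The plan is to reduce the theorem to a single majority vote inside one small subpopulation. First I would pin down what these learners actually compute: a locally dependent $k$-subpopulation mixture learner $\mathcal{A}$ that minimizes the $0$-$1$ loss in binary classification outputs, on each subpopulation $i$, the label occurring more often among $\{y : (x,y)\in D,\ C_{\mathcal{D}}(x)=i\}$, with ties broken by some rule internal to $\mathcal{A}$ --- in other words $\mathcal{A}(D)$ is exactly the vector of the $k$ subpopulation-wise majority labels. This is immediate from the two definitions: local dependence makes $f(i)$ a function of those labels only, and $0$-$1$ loss minimization over a binary label set is majority vote. Consequently it suffices to exhibit a poisoning set $D_p$, supported inside one subpopulation's support $\mathrm{Supp}_{\mathcal{X}}(\mathcal{D}_{i^*})$ and of the claimed size, whose addition changes the majority label of that subpopulation; then $\mathcal{A}(D\cup D_p)\neq\mathcal{A}(D)$ for every such $\mathcal{A}$ at once, with zero collateral on the other subpopulations.

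For the first bound, write $n_i = |\{(x,y)\in D : C_{\mathcal{D}}(x)=i\}|$, so $\sum_{i=1}^k n_i = n$ and some $i^*$ has $n_{i^*}\le n/k$. The adversary takes $\mathcal{F}$ to be the indicator of $\mathrm{Supp}_{\mathcal{X}}(\mathcal{D}_{i^*})$ and lets $D_p$ consist of points of $\mathrm{Supp}_{\mathcal{X}}(\mathcal{D}_{i^*})$ all carrying the current minority label of subpopulation $i^*$: if that subpopulation contains at least one point of each label, appending $n_{i^*}$ such points makes the former minority count strictly exceed the former majority count, so the majority flips regardless of tie-breaking and the attack has size $\le n/k$. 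The only way this fails is if $i^*$ is monochromatic and $n_{i^*}=n/k$ exactly (so the extra point that would break the resulting tie is out of budget), and $n_{i^*}=n/k$ forces every $n_i$ to equal $n/k$; attacking instead a non-monochromatic subpopulation still works whenever such a configuration occurs, so the attack fails only on the event that all $n_i$ equal $n/k$ and every subpopulation is monochromatic, which for $k\ge 2$ has probability strictly below $1/2$ (a multinomial count vector essentially never lands exactly on its mean). Hence $\Pr[\mathcal{A}(D\cup D_p)=\mathcal{A}(D)]<1/2$.

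For the refined bound, let $i^*$ be the subpopulation of smallest weight, $\alpha_{i^*}=\alpha=\min_i\alpha_i$. Each of the $n$ samples lands in $i^*$ independently with probability $\alpha$, so $n_{i^*}\sim\mathrm{Binomial}(n,\alpha)$ with $\mathbb{E}[n_{i^*}]=\alpha n$, and a multiplicative Chernoff bound gives $n_{i^*}<2\alpha n$ except with probability at most $\exp(-\alpha n/2)$. On the complementary event the adversary again appends fewer than $2\alpha n$ minority-labeled points inside $\mathrm{Supp}_{\mathcal{X}}(\mathcal{D}_{i^*})$ to flip that subpopulation's majority (using one additional point to break the tie in the degenerate case that $i^*$ is monochromatic, still within the $2\alpha n$ budget), forcing $\mathcal{A}(D\cup D_p)\neq\mathcal{A}(D)$; this is the claimed success probability $\ge 1-\exp(-\alpha n/2)$.

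\textbf{Main obstacle.} Neither the pigeonhole step nor the concentration step is hard in itself; the delicate part is the low-level bookkeeping of ties and of degenerate (monochromatic, i.e.\ noiseless) subpopulations, so that the bounds come out to exactly $n/k$ and $2\alpha n$ rather than $n/k+1$ or a larger multiple of $\alpha n$, and so that the conclusion holds uniformly against \emph{every} $0$-$1$-loss-minimizing learner --- including one whose tie-breaking is chosen adversarially to preserve the original output. Making the corresponding ``bad'' events fit inside the probabilities $1/2$ and $\exp(-\alpha n/2)$, rather than a looser $\exp(-\Omega(\alpha n))$, is where the real care lies.
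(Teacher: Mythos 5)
Your proposal is correct and follows essentially the same route as the paper: pigeonhole yields a smallest subpopulation of size at most $n/k$, poisoning with opposite-labeled points inside its support flips the within-subpopulation majority that every locally dependent $0$-$1$-loss minimizer must output, and the $2\alpha n$ refinement is the same multiplicative Chernoff bound on $\mathrm{Binomial}(n,\alpha)$ with $\delta=1$. The differences are only presentational --- the paper casts part one as a contradiction using flipped copies of the entire smallest subpopulation together with an indistinguishability argument for the probability-$\frac{1}{2}$ claim, whereas you argue directly and book-keep ties and monochromatic subpopulations explicitly; your treatment of those degenerate corner cases is no less rigorous than the paper's own.
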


\begin{proof}
For both parts of the theorem, the poisoning attack is the same: find the subpopulation with the fewest samples in $D$, and add samples from that subpopulation with flipped labels.
No learner can distinguish between the case that the original labels are correct, and the case where the flipped labels are correct. Since the learner makes decisions based on subpopulations it will classify the subpopulation according to the majority label.
%
%From there, all we need to do is compute the size of the smallest subpopulation.

We address the first part of the theorem by contradiction. Assume that there exists a dataset $D$ of size $n$ drawn from a  noisy $k$-subpopulation mixture distribution, such that for any poisoning attack $D_p$ of size $\le n/k$, there exists a locally-dependent  $k$-subpopulation mixture learner $\mathcal{A}$ for which:
$$ \mathcal{A}(D\cup D_p)=\mathcal{A}(D), \mbox{ with probability } > \frac{1}{2}. $$
Assume without loss of generality that $K = \argmin_i |D\cap Supp_{\mathcal{X}}(\mathcal{D}_i)|$, and write $D_K = D\cap Supp_{\mathcal{X}}(\mathcal{D}_i)$.
Our attack operates by taking $D_K$ and flipping all of its labels, producing $D_K^p$.

Suppose we provide the learner with the original dataset $D$ and the returned classifier is $f$. On the other hand, when we provide the learner with the poisoned dataset $D' = D || D_K^p$, it returns $f'$.

Datasets $D$ and $D'$ differ in $|D_K|$ records, which is $\le n/k$, according to the pigeonhole principle ($K$ being the smallest subpopulation). 
From the properties we assumed about the learner $\mathcal{A}$ , we have:
$$ \mathcal{A}(D')=\mathcal{A}(D), \mbox{ with probability } > \frac{1}{2}, $$
which implies that learners $f$ and $f'$ return the same label for subpopulation $K$: 
$f(K) = f'(K)$.

On the other hand, the learners $f$ and $f'$ are locally dependent, and make their decisions only based on subpopulations to minimize the 0-1 loss. Because we added in $D'$ enough points in the subpopulation $K$ to flip the decision, it turns out that: $f'(K) = 1-f(K)$.

But these two statements result in a contradiction, which proves the first part of the theorem.

We now turn to the second part of the theorem statement, to improve the bounds on the size of the smallest subpopulation. This argument is particularly powerful when the mixture distribution does not have uniform weights.

For a sample from a noisy $k$-subpopulation mixture distribution, consider the smallest subpopulation $j=\argmin_i \alpha_i$ and its mixture coefficient $\alpha = \alpha_j$.

%which forms an upper bound on the smallest subpopulation and therefore the smallest attack.
%
The number of points in subpopulation $j$ is $\sum_{i=1}^n\text{Bernoulli}(\alpha_j)$.
We use the following multiplicative Chernoff bound, with $\delta>0$:
\[
\Pr[X>(1+\delta)\mu] < \exp\left(-\frac{\delta^2\mu}{2\delta}\right).
\] Setting $\mu=\alpha_j n = \alpha n$ and $\delta=1$ in the above Chernoff bound gives

$$\Pr[|D\cap Supp_{\mathcal{X}}(\mathcal{D}_j)| > 2\alpha n] \le \exp\left(-\frac{\alpha n}{2}\right).$$

This concludes our proof, as the size of the smallest subpopulation is less than $2\alpha n$ with probability at least $1 - \exp(-\alpha n/2)$.

\end{proof}

For mixture distributions with non-uniform weights, the Chernoff bound provides more effective bounds than the pigeonhole principle. For example, if $n=1000$, the smallest mixture coefficient $\alpha=0.01$, and $k=5$, the Chernoff bound approach ensures that a poisoning attack of size 20 suffices with probability $>1-\exp(-5)=99.3\%$, whereas the pigeonhole principle requires a poisoning attack of size 200.
\begin{figure*}[h]
    \centering
\begin{subfigure}{0.33\textwidth}
%\begin{subfigure}{0.25\textwidth}
  \includegraphics[width=\linewidth]{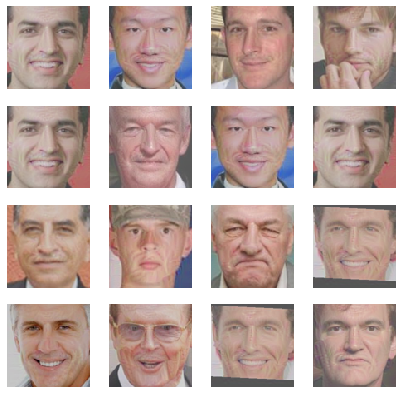}
  \caption{}
  \label{fig:cmatch1}
\end{subfigure}\hfil\vrule width 2pt \hfil
\begin{subfigure}{0.33\textwidth}
%\begin{subfigure}{0.25\textwidth}
  \includegraphics[width=\linewidth]{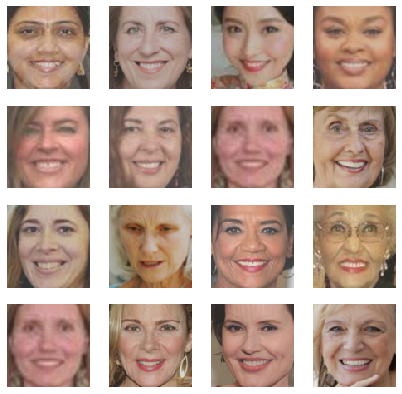}
  \caption{}
  \label{fig:cmatch2}
\end{subfigure}\hfil\vrule width 2pt \hfil
\begin{subfigure}{0.33\textwidth}
%\begin{subfigure}{0.25\textwidth}
  \includegraphics[width=\linewidth]{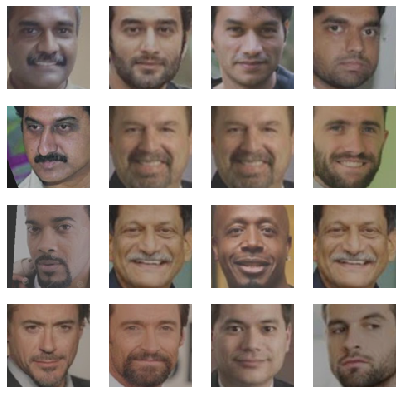}
  \caption{}
  \label{fig:cmatch3}
\end{subfigure}
\rule{\textwidth}{2pt}
\begin{subfigure}{0.33\textwidth}
%\begin{subfigure}{0.25\textwidth}
  \includegraphics[width=\linewidth]{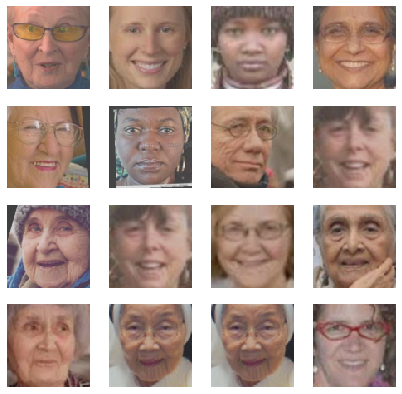}
  \caption{}
  \label{fig:cmatch4}
\end{subfigure}\hfil\vrule width 2pt \hfil
\begin{subfigure}{0.33\textwidth}
%\begin{subfigure}{0.25\textwidth}
  \includegraphics[width=\linewidth]{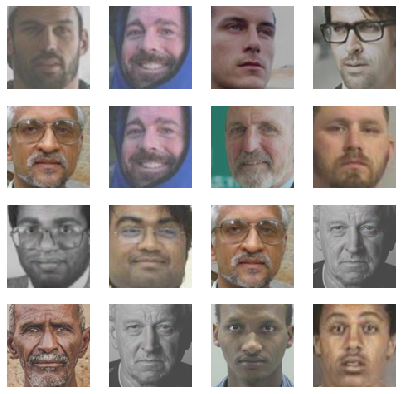}
  \caption{}
  \label{fig:cmatch5}
\end{subfigure}\hfil\vrule width 2pt \hfil
\begin{subfigure}{0.33\textwidth}
%\begin{subfigure}{0.25\textwidth}
  \includegraphics[width=\linewidth]{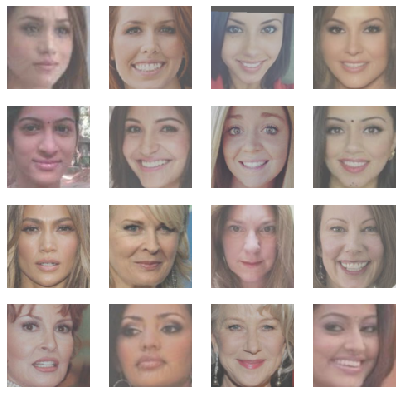}
  \caption{}
  \label{fig:cmatch6}
\end{subfigure}
\caption{Six example subpopulations generated with \cmatch\ on UTKFace. While they don't necessarily correspond perfectly to human-interpretable subpopulations, patterns do show up that may be aligned with an adversary's objective.}
\label{fig:examples}
\end{figure*}

\section{Example \cmatch\ Subpopulations}
\label{app:examples}
We present some example subpopulations in Figure~\ref{fig:examples}, generated with \cmatch\ on UTKFace. The subpopulations are not perfectly human-interpretable, but do have some consistent trends. For example, in Figure~\ref{fig:cmatch1}, the subpopulation is populated primarily by older white men, with a few exceptions. Figure~\ref{fig:cmatch2} has mostly older wthie women, and Figure~\ref{fig:cmatch3} is mostly bearded men or men with darker skin color (and potentially some white men in environments with darker lighting).

\end{document}